\newcommand{\cF}{{\mathcal F}}
\newcommand{\cO}{{\mathcal O}}
\newcommand{\cN}{{\mathcal N}}
\newcommand{\cP}{{\mathcal P}}
\newcommand{\X}{{\mathcal X}} 
\newcommand{\Y}{{\mathcal Y}} 
\newcommand{\cH}{{\mathcal H}}
\newcommand{\cS}{{\mathcal S}} 
\newcommand{\E}{{{\mathbb E}}}
\newcommand{\R}{{\mathbb R}} 
\newcommand{\bE}{{\mathbb E}}
\newcommand{\KL}{\mathop{\mathrm{KL}}\nolimits}
\newcommand{\KSD}{\mathop{\mathrm{KSD}}\nolimits}
\newcommand{\HS}{\mathop{\mathrm{HS}}\nolimits}
\newcommand{\op}{\mathop{\mathrm{op}}\nolimits}
\newcommand{\diver}{\mathop{\mathrm{div}}\nolimits}
\newcommand{\st}{\mathop{\mathrm{Stein}}\nolimits}
\newcommand{\ps}[1]{\langle #1 \rangle}
\theoremstyle{plain}
\newtheorem{theorem}{Theorem}[section]
\newtheorem{proposition}[theorem]{Proposition}
\newtheorem{lemma}[theorem]{Lemma}
\newtheorem{corollary}[theorem]{Corollary}
\theoremstyle{definition}
\newtheorem{definition}[theorem]{Definition}
\newtheorem{assumption}[theorem]{Assumption}
\theoremstyle{remark}
\icmltitlerunning{Convergence of SVGD in the Population Limit under T1}
\begin{document}

\twocolumn[
\icmltitle{A Convergence Theory for SVGD in the Population Limit \\ under Talagrand's Inequality T1}




\begin{icmlauthorlist}
\icmlauthor{Adil Salim}{MSR}
\icmlauthor{Lukang Sun}{KAUST}
\icmlauthor{Peter Richtárik}{KAUST}
\end{icmlauthorlist}

\icmlaffiliation{MSR}{Microsoft Research, Redmond, USA}
\icmlaffiliation{KAUST}{King Abdullah University of Science and Technology, Thuwal, Saudi Arabia}

\icmlcorrespondingauthor{Adil Salim}{adilsalim@microsoft.com}

\icmlkeywords{Stein Variational Gradient Descent, Complexity, Convergence, Optimization, Sampling}

\vskip 0.3in
]



\printAffiliationsAndNotice{}  

\begin{abstract}
Stein Variational Gradient Descent (SVGD) is an algorithm for sampling from a target density which is known up to a multiplicative constant. Although SVGD is a popular algorithm in practice, its theoretical study is limited to a few recent works. We study the convergence of SVGD in the population limit, (i.e., with an infinite number of particles) to sample from a non-logconcave target distribution satisfying Talagrand's inequality T1. We first establish the convergence of the algorithm. Then, we establish a dimension-dependent complexity bound in terms of the Kernelized Stein Discrepancy (KSD). Unlike existing works, we do not assume that the KSD is bounded along the trajectory of the algorithm. Our approach relies on interpreting SVGD as a gradient descent over a space of probability measures. 
\end{abstract}


\section{Introduction}
Sampling from a given target distribution $\pi$ is a fundamental task of many Machine Learning procedures. In Bayesian Machine Learning, the target distribution $\pi$ is typically known up to a multiplicative factor and often takes the form
\begin{equation}
    \label{eq:target}
    \pi(x) \propto \exp(-F(x)),
\end{equation}
where $F:\X \to \R$ is a $L$-smooth nonconvex function defined on $\X \coloneqq \R^d$, and satisfying $$\int \exp(-F(x))dx < \infty.$$

As sampling algorithms are intended to be applied to large scale problems, it has become increasingly important to understand their theoretical properties, such as their complexity, as a function of the \textit{dimension of the problem} $d$, and the \textit{desired accuracy} $\varepsilon$. In this regard, most of the Machine Learning literature on sampling has concentrated on understanding the complexity (in terms of $d$ and $\varepsilon$) of (variants of) the Langevin algorithm, see \citet{durmus2018analysis,bernton2018langevin,wibisono2018sampling,cheng2017underdamped,salim2020primal,hsieh2018mirrored,dalalyan2017theoretical,durmus2017nonasymptotic,rolland2020double,vempala2019rapid,zou2019sampling,csimvsekli2017fractional,shen2019randomized,bubeck2018sampling,durmus2018efficient,ma2019there,foster2021shifted,li2021sqrt}. 

\subsection{Stein Variational Gradient Descent (SVGD)}

Stein Variational Gradient Descent (SVGD)~\citep{liu2016stein,liu2017stein} is an alternative to the Langevin algorithm that has been applied in several contexts in Machine Learning, including Reinforcement Learning~\citep{liu2017policy}, sequential decision making~\citep{zhang2018learning,zhang2019scalable}, Generative Adversarial Networks~\citep{tao2019variational}, Variational Auto Encoders~\citep{pu2017vae}, and Federated Learning~\citep{kassab2020federated}. 

The literature on theoretical properties of SVGD is scarce compared to that of Langevin algorithm, and limited to a few recent works~\citep{korba2020non,lu2019scaling,duncan2019geometry,liu2017stein,chewi2020svgd,gorham2020stochastic,nusken2021stein,shi2021sampling}. In this paper, our goal is to provide a clean convergence theory for SVGD in the population limit, \textit{i.e.}, with an infinite number of particles.

\subsection{Related works}

The Machine Learning literature on the complexity of sampling from a non-logconcave target distribution has mainly focused on the Langevin algorithm. For instance,\footnote{The example of~\citet{vempala2019rapid} is taken only for illustration purpose. Many other results were obtained for Langevin algorithm, even in nonconvex cases, see above.} \citet{vempala2019rapid} showed that Langevin algorithm reaches $\varepsilon$ accuracy in terms of the Kullback-Leibler divergence after $\tilde{\Omega}(\frac{L^2 d}{\lambda^2 \varepsilon})$ iterations, assuming that the target distribution satisfies the logarithmic-Sobolev inequality (LSI) with constant $\lambda$. In this work, we will assume Talagrand's inequality T1 with constant $\lambda$, which is milder than LSI with constant $\lambda$, and we will prove a complexity result in terms of another discrepancy called Kernelized Stein Discrepancy (KSD). Besides, a very recent work studies Langevin algorithm for a non-logconcave target distribution without assuming LSI and provides guarantees in terms of the Fisher information~\cite{balasubramanian2022towards}.


Most existing results on SVGD deal with the \textit{continuous time} approximation of SVGD in the \textit{population limit}, a Partial Differential Equation (PDE) representing SVGD with a vanishing step size and an infinite number of particles~\citep{lu2019scaling,duncan2019geometry,liu2017stein,nusken2021stein,chewi2020svgd}. In particular, \citet{duncan2019geometry} propose a Stein logarithmic Sobolev inequality that implies the linear convergence of this PDE. However, it is not yet understood when Stein logarithmic Sobolev inequality holds. Besides, \citet{chewi2020svgd} showed that the Wasserstein gradient flow of the chi-squared divergence can be seen as an approximation of that PDE, and showed linear convergence of the Wasserstein gradient flow of the chi-squared under Poincaré inequality. Other results, such as those of~\citet{lu2019scaling,liu2017stein,nusken2021stein}, include asymptotic convergence properties of the PDE, but do not include convergence rates. In this paper, we will prove convergence rates for SVGD in discrete time.

\subsubsection{Comparison to~\citet{korba2020non}}


The closest work to ours is~\citet{korba2020non}. To our knowledge, \citet{korba2020non} showed the first complexity result for SVGD in \textit{discrete time}. This result is proven in the \textit{population limit} and in terms of the Kernelized Stein Discrepancy (KSD), similarly to our main complexity result. 

However, their complexity result relies on the assumption that the KSD is \textit{uniformly bounded} along the iterations of SVGD, an assumption that cannot be checked prior to running the algorithm. Moreover, their complexity bound does not express \textit{the dependence in the dimension} $d$ explicitly. This is because the uniform bound on the KSD appears in their complexity bound. On the contrary, one of our contributions is to present a dimension-dependent complexity result under verifiable assumptions.

Besides, \citet{korba2020non} provide a bound on the distance between SVGD in the finite number of particles regime and SVGD in the population limit. This bound cannot be used to study the complexity or convergence rate of SVGD in the finite number of particles regime, see~\citet[Proposition 7]{korba2020non}.

\subsection{Contributions}
\label{sec:contrib}
We consider SVGD in the population limit, similarly to concurrent works such as~\citet{liu2017stein,korba2020non,gorham2020stochastic}.
Our paper intends to provide a clean analysis of SVGD, a problem stated in~\citet[Conclusion]{liu2017stein}. To this end, we do not make any assumptions on the trajectory of the algorithm. Instead, our key assumption is that the target distribution $\pi$ satisfies T1, the mildest of the Talagrand's inequalities, which holds under a mild assumption on the tails of the distribution; see~\citet[Theorem 22.10]{villani2008optimal}. Moreover, T1 is implied, for example, by the logarithmic Sobolev inequality~\citep[Theorem 22.17]{villani2008optimal}, with the same constant $\lambda$.

Although sampling algorithms are meant to be applied on high-dimensional problems, the question of the dependence of the complexity of SVGD in $d$ has not been studied in concurrent works, nor has been studied the generic weak convergence of SVGD under verifiable assumptions, to our knowledge. Assuming that the T1 inequality holds, we provide 
\begin{itemize}
    \item a generic weak convergence result for SVGD (actually our result is a bit stronger: convergence holds in 1-Wasserstein distance), 
    \item a complexity bound for SVGD in terms of the dimension $d$ and the desired accuracy $\varepsilon$, under verifiable assumptions (i.e., assumptions that do not depend on the trajectory of the algorithm): $\tilde{\Omega}\left(\frac{L{d}^{3/2}}{\lambda^{1/2}\varepsilon}\right)$ iterations suffice to obtain a sample $\mu$ such that $\KSD^2(\mu|\pi) < \varepsilon$, where $L$ is the smoothness constant of $F$ and $\lambda$ the constant in T1 inequality.
\end{itemize}
Note that these results hold without assuming $F$ convex. In particular, in the population limit, SVGD applied to \textit{non-logconcave} target distributions satisfying T1 converges to the target distribution.


\subsection{Paper structure}
The remainder of the paper is organized as follows. In Section~\ref{sec:background} we introduce the necessary mathematical and notational background on optimal transport, reproducing kernel Hilbert spaces and SVGD in order to be able to describe and explain our results. Section~\ref{sec:svgd} is devoted to the development of our theory. Finally, in Section~\ref{sec:weak} we formulate three corollaries of our key result, capturing weak convergence and complexity estimates for SVGD. Technical proofs are postponed to the Appendix.

\section{Background and Notation} \label{sec:background}
 
\subsection{Notation}

For any Hilbert space $H$, we denote by $\ps{\cdot,\cdot}_{H}$ the inner product of $H$ and by $\|\cdot\|_{H}$ its norm.

We denote by $C_0(\X)$ the set of continuous functions from $\X$ to $\R$ vanishing at infinity and by $C^1(\X,\Y)$ the set of continuously differentiable functions from $\X$ to a Hilbert space $\Y$. Given $\phi \in C^1(\X,\R)$, its gradient is denoted by $\nabla \phi$, and if $\phi \in C^1(\X,\X)$, the Jacobian of $\phi$ is denoted by $J \phi$. For every $x \in \X$, $J \phi(x)$ can be seen as a $d \times d$ matrix. The trace of the Jacobian, also called divergence, is denoted by $\diver \phi$. 

For any $d \times d$ matrix $A$, $\|A\|_{\HS}$ denotes the Hilbert Schmidt norm of $A$ and $\|A\|_{\op}$ the operator norm of $A$ viewed as a linear operator $A: \X \to \X$ (where $\X$ is endowed with the standard Euclidean inner product). Finally, $\delta_x$ is the Dirac measure at $x \in \X$.

\subsection{Optimal transport}

Consider $p \geq 1$. We denote by $\cP_p(\X)$ the set of Borel probability measures $\mu$ over $\X$ with finite $p^{\text{th}}$ moment: $\int \|x\|^p d\mu(x) < \infty$. We denote by $L^p(\mu)$ the set of measurable functions $f : \X \to \X$ such that $\int \|f\|^p d\mu < \infty$. Note that the identity map $I$ of $\X$ satisfies $I \in L^p(\mu)$ if $\mu \in \cP_p(\X)$. Moreover, denoting the image (or pushforward) measure of $\mu$ by a map $T$ as $T \# \mu$, we have that if $\mu \in \cP_p(\X)$ and $T \in L^p(\mu)$ then $T \# \mu \in \cP_p(\X)$ using the transfer lemma.

For every $\mu,\nu \in \cP_p(\X)$, the $p$-Wasserstein distance between $\mu$ and $\nu$ is defined by
\begin{equation}
    \label{eq:wass}
    W_p^p(\mu,\nu) = \inf_{s \in \cS(\mu,\nu)} \int \|x-y\|^p ds(x,y),
\end{equation}
where $\cS(\mu,\nu)$ is the set of couplings between $\mu$ and $\nu$, \textit{i.e.}, the set of nonnegative measures over $\X^2$ such that $P \# s = \mu$ (resp. $Q \# s = \nu$) where $P: (x,y) \mapsto x$ (resp. $Q: (x,y) \mapsto y$) denotes the projection onto the first
(resp. the second) component. The $p$-Wasserstein distance is a metric over $\cP_p(\X)$. The metric space $(\cP_2(\X),W_2)$ is called the Wasserstein space.

In this paper, we consider a target probability distribution $\pi$ proportional to $\exp(-F)$, where $F$ satisfies the following.
\begin{assumption}
\label{ass:V_Lipschitz} The Hessian $H_{F}$ is well-defined and $\exists L \geq 0$ such that $\|H_{F}\|_{\op} \le L$.
\end{assumption}
Moreover, using $\int \exp(-F(x))dx < \infty$, $F$ admits a stationary point.
\begin{proposition}
\label{prop:stationary} Under Assumptions~\ref{ass:V_Lipschitz} ,
there exists $x_\star \in \X$ for which $\nabla F(x_\star) = 0$, \textit{i.e.}, $F$ admits a stationary point.
\end{proposition}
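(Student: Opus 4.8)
The plan is to prove the stronger statement that $F$ attains its global minimum on $\X$; since $F$ is differentiable, any minimizer is automatically a stationary point. Two facts drive the argument. First, Assumption~\ref{ass:V_Lipschitz} makes $\nabla F$ globally $L$-Lipschitz, hence the descent lemma $F(y)\le F(x)+\langle \nabla F(x),y-x\rangle+\tfrac{L}{2}\|y-x\|^2$ holds for all $x,y\in\X$. Second, $\int e^{-F}<\infty$ forces every sublevel set of $F$ to have finite Lebesgue measure, because $\mathrm{Leb}(\{F\le c\})\le e^{c}\int e^{-F}\,dx<\infty$ for every $c\in\R$.

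The core step is to show that each sublevel set $\{F\le c\}$ is bounded. I would argue by contradiction: assume $\{F\le c\}$ contains a sequence $(x_n)$ with $\|x_n\|\to\infty$, and pass to a subsequence along which either $\sup_n\|\nabla F(x_n)\|=:\Lambda<\infty$ or $\|\nabla F(x_n)\|\to\infty$. In the first case, the descent lemma gives $F\le c+\Lambda+\tfrac{L}{2}$ on every unit ball $B(x_n,1)$; extracting a further subsequence so that these balls are pairwise disjoint (possible since $\|x_n\|\to\infty$) yields $\mathrm{Leb}(\{F\le c+\Lambda+\tfrac{L}{2}\})=\infty$, contradicting the finite-measure fact above. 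In the second case, I would take the gradient step $x_n^{+}=x_n-\tfrac1L\nabla F(x_n)$ and a ball $B(x_n^{+},\rho_n)$ with $\rho_n$ proportional to $\|\nabla F(x_n)\|/L$ (with a small enough universal constant): using the descent lemma at $x_n^{+}$ together with the $L$-Lipschitzness of $\nabla F$ to bound $F$ over that ball, the quadratic decrease $\sim\tfrac{1}{2L}\|\nabla F(x_n)\|^2$ obtained from the step beats the oscillation of $F$ across the ball — which is of the same order $\|\nabla F(x_n)\|^2/L$ but carries a strictly smaller constant — so that $F\le c$ on $B(x_n^{+},\rho_n)$ with $\rho_n\to\infty$. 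A single such ball of sufficiently large volume again contradicts the finite-measure fact. Hence every $\{F\le c\}$ is bounded.

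To finish, fix any $x_0\in\X$. The set $\{F\le F(x_0)\}$ is nonempty, closed (by continuity of $F$, which follows from Assumption~\ref{ass:V_Lipschitz}) and bounded by the previous step, hence compact; therefore $F$ attains its infimum on it, and this infimum is the infimum of $F$ over all of $\X$. Calling a minimizer $x_\star$, differentiability of $F$ gives $\nabla F(x_\star)=0$.

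I expect the only delicate point to be the blow-up case: one must pick the radius $\rho_n$ so that the decrease guaranteed by the descent lemma strictly dominates the quadratic growth of the upper bound on $F$ over $B(x_n^{+},\rho_n)$. Choosing, for instance, $\rho_n=\|\nabla F(x_n)\|/(10L)$ and using $\|\nabla F\|\le 3\|\nabla F(x_n)\|$ on that ball makes the bookkeeping explicit; I would confine this short computation to a single displayed estimate and keep the narrative at the level above.
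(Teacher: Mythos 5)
Your proof is correct, and its skeleton coincides with the paper's: show that the sublevel sets of $F$ are bounded (hence compact, by continuity), deduce that $F$ attains its minimum, and conclude $\nabla F(x_\star)=0$; in both arguments the boundedness is obtained by contradiction from $\int e^{-F}<\infty$ combined with the descent lemma. Where you genuinely differ is the mechanism of the contradiction. The paper works directly with the integral: on each unit ball centered at a point $x_n$ of the sublevel set it bounds $F$ by the quadratic expansion and applies Jensen's inequality to the uniform measure on the ball, so the linear term $\langle \nabla F(x_n),x-x_n\rangle$ averages to zero and each of infinitely many disjoint balls contributes at least $V e^{-C-L/2}$ to $\int e^{-F}$ --- uniformly in $\|\nabla F(x_n)\|$, with no case distinction. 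You instead convert integrability into $\mathrm{Leb}(\{F\le c\})\le e^{c}\int e^{-F}<\infty$ and then split according to whether the gradients along the escaping sequence stay bounded (disjoint unit balls on which $F\le c+\Lambda+\tfrac L2$) or blow up (a gradient step $x_n-\tfrac1L\nabla F(x_n)$ whose guaranteed decrease $\tfrac{1}{2L}\|\nabla F(x_n)\|^2$ dominates the oscillation, at most $3\|\nabla F(x_n)\|\rho_n=\tfrac{3}{10L}\|\nabla F(x_n)\|^2$, over a ball of radius $\rho_n=\|\nabla F(x_n)\|/(10L)\to\infty$); I checked your constants and both cases indeed produce subsets of infinite Lebesgue measure inside a sublevel set, which is the required contradiction (note that in the blow-up case $L>0$ is automatic, since $\|\nabla F(x_n)\|\to\infty$ rules out a constant gradient). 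The paper's Jensen trick buys a shorter, case-free estimate; your route is more elementary (no Jensen), at the price of the dichotomy and the explicit step-size bookkeeping, and the blow-up case adds the structural observation that large gradients at points of a sublevel set force that sublevel set to contain arbitrarily large balls.
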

To specify the dependence in the dimension of our complexity bounds, we will initialize the algorithm from a Gaussian distribution centered at a stationary point. Such a stationary point can be found by gradient descent on $F$ for instance. 

The task of sampling from $\pi$ can be formalized as an optimization problem. Indeed, define the Kullback-Leibler ($\KL$) divergence as 
\begin{equation}
\label{eq:KL}
    \KL(\mu|\pi) \coloneqq \int \log\left(\frac{d\mu}{d\pi}(x)\right)d\mu(x),
\end{equation}
if $\mu$ admits the density $\frac{d\mu}{d\pi}$ with respect to $\pi$, and $\KL(\mu|\pi) \coloneqq +\infty$ else. Then, $\KL(\mu|\pi) \geq 0$ and $\KL(\mu|\pi) = 0$ if and only if $\mu = \pi$. Therefore, assuming $\pi \in \cP_2(\X)$, the optimization problem
\begin{equation}
    \label{eq:optim-pb}
    \min_{\mu \in \cP_2(\X)} \cF(\mu),
\end{equation}
where 
\begin{equation*}
    \cF(\mu) \coloneqq \KL(\mu|\pi),
\end{equation*}
admits a unique solution: the distribution $\pi$.
We will see in Section~\ref{sec:svgd} that SVGD can be seen as a gradient descent algorithm to solve~\eqref{eq:optim-pb}.

Indeed, the Wasserstein space can be endowed with a differential structure. In particular, when it is well defined, the Wasserstein gradient of the functional $\cF$ denoted by $\nabla_W \cF(\mu)$ is an element of $L^2(\mu)$ and satisfies $\nabla_W \cF(\mu) = \nabla \log \left(\frac{d\mu}{d\pi}\right)$.

\subsubsection{Functional inequalities}

The analysis of sampling algorithm in the case where $F$ is nonconvex often goes through functional inequalities.

\begin{definition}[Logarithmic Sobolev Inequality (LSI)]
The distribution $\pi$ satisfies the Logarithmic Sobolev Inequality if there exists $\lambda > 0$ such that  for all $\mu \in \cP_2(\X)$, $$\cF(\mu) \leq \frac{2}{\lambda}\|\nabla_W \cF(\mu)\|_{L^2(\mu)}^2.$$
\end{definition}

LSI is a popular assumption in the analysis of Langevin algorithm in the case when $F$ is not convex see \textit{e.g.}~\citet{vempala2019rapid}.

\begin{definition}[Talagrand's Inequality T$p$]
Let $p \geq 1$. The distribution $\pi$ satisfies the Talagrand's Inequality T$p$ if there exists $\lambda > 0$ such that for all $\mu \in \cP_p(\X)$, we have $W_p(\mu,\pi) \leq \sqrt{\frac{2 \cF(\mu)}{\lambda}}$.
\end{definition}

We now claim that T1 is milder than LSI. Indeed, using $W_1(\mu,\pi) \leq W_2(\mu,\pi)$, T2 implies T1 with the same constant $\lambda$. Moreover, using~\citet[Theorem 22.17]{villani2008optimal}, LSI implies T2 with the same constant $\lambda$. In conclusion, LSI $\Rightarrow$ T2 $\Rightarrow$ T1, with the same constant $\lambda$. 

Besides, if $F$ is $\lambda$-strongly convex, then $\pi$ satisfies LSI with constant $\lambda$. A bounded perturbation of $\pi$ in the latter case would also satisfies LSI with a constant independent of the dimension~\citep[Remark 21.5]{villani2008optimal}.

Finally, to get the exponential convergence of SVGD in continuous time, another inequality called Stein-LSI was proposed in~\citet{duncan2019geometry}. Stein-LSI is an assumption on both the kernel and the target distribution, and it implies LSI. Obtaining reasonable sufficient conditions for Stein-LSI to hold is an open problem, but there are simple cases where it cannot hold~\citep[Lemma 36]{duncan2019geometry}. In particular, Stein-LSI never holds under the assumptions that we will make in this paper to study SVGD in discrete time, see~\citet[Section 11.3]{korba2020non}.

Our key assumption on $\pi$ is that it satisfies the Talagrand's inequality T1~\citep[Definition 22.1]{villani2008optimal}.
\begin{assumption}
\label{ass:T1} 
The target distribution $\pi$ satisfies T1.
\end{assumption}
We will use~\Cref{ass:T1} to recursively control the KSD by the KL divergence along the iterations of the algorithm.

The target distribution $\pi$ satisfies T1 if and only if there exist $a \in \X$ and $\beta > 0$ such that 
\begin{equation}
\label{eq:vil2210}
\int \exp(\beta \|x-a\|^2)d\pi(x) < \infty,
\end{equation}
see~\citet[Theorem 22.10]{villani2008optimal}. Therefore, \Cref{ass:T1} is essentially an assumption on the tails of $\pi$. In particular, $\pi \in \cP_2(\X)$.


\subsection{Reproducing Kernel Hilbert Space}

We consider a kernel $k$ associated to a Reproducing Kernel Hilbert Space (RKHS) denoted by $\cH_0$. We denote by $\Phi : \X \to \cH_0$ the so-called feature map $\Phi : x \mapsto k(\cdot,x)$. The product space $\cH_0^d$ is also a Hilbert space denoted $\cH \coloneqq \cH_0^d$. We make the following assumption on the kernel $k$.
\begin{assumption}
\label{ass:k_bounded}
There exists $ B>0$ such that the inequalities $$\|\Phi(x)\|_{\cH_0}\le B,$$
and
$$\|\nabla \Phi(x)\|_{\cH}^2 = 
 \sum_{i=1}^d \|\partial_{i} \Phi(x)\|^2_{\cH_0}\le B^2$$
hold for all $x \in \X$. Moreover, $\nabla \Phi : \X \to \cH$ is continuous.
\end{assumption}

\Cref{ass:k_bounded} is satisfied by the Gaussian kernel for example, with $B$ independent of $d$ using a scaling argument.
\Cref{ass:k_bounded} states that $\Phi : \X \to \cH_0$ is bounded, Lipschitz and $C^1$. This is satisfied by many classical kernels used in practice. Note that $k(x,x) =  \|\Phi(x)\|_{\cH_0}^2$ and that $\diver_1 \nabla_2 k(x,a) = \ps{\nabla \Phi(x),\nabla \Phi(a)}_{\cH}$ (in particular, $\diver_1 \nabla_2 k(x,x) = \|\nabla \Phi(x)\|_{\cH}^2$). Hence, $\nabla \Phi$ is continuous iff $x \mapsto \diver_1 \nabla_2 k(x,x)$ and $x \mapsto \diver_1 \nabla_2 k(x,a)$ are continuous for every $a \in \X$.

Under Assumption~\ref{ass:k_bounded}, $\cH \subset L^2(\mu)$ for every probability distribution on $\X$, and the inclusion map $\iota_{\mu}: \cH \to L^2(\mu)$ is continuous. We denote by $P_{\mu}: L^2(\mu) \to \cH$ its adjoint defined by the relation: for every $f \in L^2(\mu)$, $g \in \cH$, \begin{equation}
    \ps{f, \iota_{\mu} g}_{L^2(\mu)} = \ps{P_{\mu} f, g}_{\cH}.
\end{equation}
Then, $P_\mu$ can be expressed as a convolution with $k$~\citep[Proposition~3]{carmeli2010vector}:
\begin{equation}
    P_{\mu} f(x) = \int k(x,y)f(y)d\mu(y),
\end{equation}
or $P_{\mu} f = \int \Phi(y)f(y)d\mu(y)$ where the integral converges in norm.

\subsection{Stein Variational Gradient Descent}
\label{sec:svgd-desc}

\subsubsection{The population limit}

Stein Variational Gradient Descent (SVGD) is an algorithm to sample from $\pi \propto \exp(-F)$. SVGD proceeds by maintaining a set of $N$ particles over $\R^d$, whose empirical distribution $\mu_n^N$ at time $n$ aims to approximate $\pi$ as $n \to \infty$, see~\citet{liu2016stein}. The SVGD algorithm is presented above. 

\begin{algorithm}
    \caption{Stein Variational Gradient Descent~\citep{liu2016stein}}
    \begin{algorithmic}
        \STATE {\bf Initialization}: a set $x_0^1, \ldots, x_0^N \in \X$ of $N$ particles, a kernel $k$, a step size $\gamma > 0$. \FOR{$n=0,1,2,\ldots$}
        \FOR{$i=1,2,\ldots,N$}
        \STATE \begin{align*}
            \hspace{-0.8cm} x_{n+1}^i = {\color{blue} x_n^i} - \frac{\gamma}{N}\sum_{j = 1}^N k({\color{blue} x_n^i},x_n^j) \nabla F(x_n^j) - \nabla_2 k({\color{blue} x_n^i},x_n^j)
        \end{align*}
        \ENDFOR
        \ENDFOR
    \end{algorithmic}
    \label{tab:SVGD}
\end{algorithm}

 Denoting by $\mu_n^N$ the empirical distribution of $x_n^1,\ldots, x_n^N$, \textit{i.e.}, $$\mu_n^N \coloneqq \frac{1}{N}\sum_{i=1}^N \delta_{x_n^i}, $$ the SVGD update can be written
 \begin{align*}
            x_{n+1}^i =& {\color{blue} x_n^i} - \gamma \int k({\color{blue} x_n^i},y) \nabla F(y) - \nabla_2 k({\color{blue} x_n^i},y)d\mu_n^N(y)\\
            =& \left(I - \gamma \int k(\cdot,y) \nabla F(y) - \nabla_2 k(\cdot,y)d\mu_n^N(y)\right)({\color{blue} x_n^i}).
        \end{align*}
        Therefore, SVGD performs the update $$
        \mu_{n+1}^N =  \left(I - \gamma \int \Phi(y) \nabla F(y) - \nabla \Phi(y) d\mu_n^N(y)\right) \# \mu_{n}^N,
        $$
        at the level of measures. We call \textit{population limit} the regime where, formally, $N = \infty$. Mathematically, this corresponds to the assumption that $\mu_0$ has a density (which can be seen as intuitively seen $\bE \lim_{N \to \infty} \mu_0^N$) which belongs to $C_0(\X)$. In this case, we shall see in our analysis that $\mu_n$ has a density for every $n$. To summarize, in the population limit, SVGD performs the same update:
\begin{equation}
\label{eq:svgdpopulation}
    \mu_{n+1} = \left(I - \gamma h_{\mu_n}\right)\#\mu_n,
\end{equation}
where
\begin{equation*}
    h_{\mu}(x) \coloneqq \int k(x,y)\nabla F(y) - \nabla_y k(x,y) d\mu(y)\end{equation*}
    or 
    \begin{equation*} h_{\mu} \coloneqq \int \Phi(y)\nabla F(y)  - \nabla \Phi(y) d\mu(y),
\end{equation*}
and where $\mu_n$ has a density.

Finally, note that the SVGD algorithm was originally derived in~\citet{liu2016stein} from its population limit. The authors first introduced the SVGD update in the population limit, and then, the SVGD algorithm (Algorithm~\ref{tab:SVGD}) is obtained from the population limit by approximating the expectations by empirical means. 

\paragraph{Our point of view on SVGD in the population limit.} We now provide the intuition behind our results on SVGD.

\begin{quote}\em In the population limit, SVGD can be seen as a Riemannian gradient descent, thanks to the following two reasons. \end{quote}

First, in a Riemannian interpretation of the Wasserstein space~\citep{villani2008optimal}, for every $\mu \in \cP_2(\X)$, the map $\exp_{\mu} : \phi \mapsto (I + \phi)\#\mu$ can be seen as the exponential map at $\mu$. In the population limit, SVGD~\eqref{eq:svgdpopulation} can be rewritten as $$\mu_{n+1} = \exp_{\mu_n}(-\gamma h_{\mu_n}).$$ 

Second, $-h_{\mu}$ can be seen as the negative gradient of $\cF$ at $\mu$ under a certain metric. Indeed, using  integration by parts, $h_{\mu} = P_{\mu} \nabla_W \cF(\mu)$, see \textit{e.g.}~\citet{korba2020non,duncan2019geometry}. Therefore, for every $g \in \cH$, $\ps{h_{\mu},g}_{\cH} = \ps{\nabla_W \cF(\mu),g}_{L^2(\mu)}$, hence $h_{\mu}$ can be seen as a Wasserstein gradient of $\cF$ under the inner product of $\cH$.

The Kernelized Stein Discrepancy (KSD) is a natural discrepancy between probability distributions that was introduced prior to SVGD~\citep{liu2016kernelized,chwialkowski2016kernel} to compare probablity measures. Indeed, if the RKHS $\cH$ is rich enough~\citep{liu2016kernelized,chwialkowski2016kernel,oates2019convergence}, an assumption that we shall always make in this paper, then 
\begin{equation*}
\KSD(\mu|\pi) = 0 \Longrightarrow \mu = \pi.    
\end{equation*}
 The KSD is intimately related to SVGD, and the KSD naturally appears in the original derivation of SVGD~\citep{liu2016stein}. The KSD is defined as the square root of the Stein Fisher Information~\citep{duncan2019geometry} $I_{\st}$: 
\begin{equation}
\label{eq:Istein}
    I_{\st}(\mu|\pi) \coloneqq \|h_{\mu}\|_{\cH}^2, \quad \KSD(\mu|\pi) \coloneqq \|h_{\mu}\|_{\cH}^2. 
\end{equation}
In this paper, we study the complexity of SVGD in terms of the KSD. To understand better the topology of the KSD and compare it to common topologies in the space of probability measures, we refer to~\citet{gorham2017measuring}. 

\section{Analysis of SVGD}
\label{sec:svgd}

In this section, we analyze SVGD in the infinite number of particles regime. Recall that in this regime, SVGD is given by $\mu_0 \in C_0(\X)$ and
\begin{equation*}
    \mu_{n+1} = (I - \gamma h_{\mu_n}) \# \mu_n,
\end{equation*}
where 
\begin{equation*}
    h_{\mu} \coloneqq \int \nabla F(x) \Phi(x) - \nabla \Phi(x) d\mu(x).
\end{equation*}

\subsection{A fundamental inequality}

We start by stating a fundamental inequality satisfied by $\cF$ for any update of the form
\begin{equation}
    \mu_{n+1} = \left(I - \gamma g\right) \# \mu_n,
\end{equation}
where $g \in \cH$.

\begin{proposition}
\label{prop:TL}
Let Assumptions~\ref{ass:V_Lipschitz} and~\ref{ass:k_bounded} hold true. Let $\alpha > 1$ and choose $\gamma > 0$ such that $\gamma \|g\|_{\cH} \leq \frac{\alpha-1}{\alpha B}$. Then,
    \begin{equation}
    \label{eq:TL}
       \cF(\mu_{n+1}) \leq \cF(\mu_{n}) - \gamma\ps{h_{\mu_n},g}_{\cH} +  \frac{\gamma^2 K}{2}\|g\|_{\cH}^2,
    \end{equation}
    where $K = (\alpha^2 + L)B$.
\end{proposition}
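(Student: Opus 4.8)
The plan is to split the Kullback--Leibler divergence into a potential-energy part and a (negative) entropy part and to control how each changes under the map $T \coloneqq I - \gamma g$ sending $\mu_n$ to $\mu_{n+1} = T\#\mu_n$. We may assume $\cF(\mu_n) < \infty$, since otherwise the claim is vacuous; then $\mu_n$ has a density $\rho_n$ (as $\pi$ has full support), so $\cF(\mu_n) = \int F\,d\mu_n + \int \rho_n \log \rho_n\,dx + \log Z$, and likewise for $\mu_{n+1}$ once we establish it has a density.

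First I would check that $T$ is a $C^1$ diffeomorphism onto an open subset of $\X$. Writing $g = (g_1,\dots,g_d)$ with $g_i \in \cH_0$ and using $g_i(x) = \ps{g_i,\Phi(x)}_{\cH_0}$ and $\partial_j g_i(x) = \ps{g_i,\partial_j\Phi(x)}_{\cH_0}$ together with \Cref{ass:k_bounded}, one gets the pointwise bounds $\|g(x)\| \le B\|g\|_{\cH}$ and $\|Jg(x)\|_{\op} \le \|Jg(x)\|_{\HS} \le B\|g\|_{\cH}$ for every $x$, with $Jg$ continuous. The step-size condition $\gamma\|g\|_{\cH} \le \frac{\alpha-1}{\alpha B}$ then gives $\gamma\|Jg(x)\|_{\op} \le 1-\tfrac1\alpha < 1$, so $JT(x) = I - \gamma Jg(x)$ is everywhere invertible with positive determinant and $\|JT(x)^{-1}\|_{\op}\le\alpha$; moreover $\ps{T(x)-T(y),\,x-y} \ge (1-\gamma B\|g\|_{\cH})\|x-y\|^2 \ge \tfrac1\alpha\|x-y\|^2$, so $T$ is injective. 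Hence $T$ is a diffeomorphism onto the open set $T(\X)$, $\mu_{n+1}$ has a density, and by the change-of-variables formula $\int\rho_{n+1}\log\rho_{n+1}\,dx - \int\rho_n\log\rho_n\,dx = -\int\log\det\!\big(I-\gamma Jg(x)\big)\,d\mu_n(x)$.

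Next I would bound the two parts. For the potential part, the descent lemma from \Cref{ass:V_Lipschitz} gives $F(x-\gamma g(x)) - F(x) \le -\gamma\ps{\nabla F(x),g(x)} + \tfrac{L\gamma^2}{2}\|g(x)\|^2$; integrating against $\mu_n$ and using $\|g(x)\|^2 \le B^2\|g\|_{\cH}^2$ bounds the change of $\int F\,d\mu$ above by $-\gamma\ps{\nabla F,g}_{L^2(\mu_n)} + c_1\gamma^2\|g\|_{\cH}^2$. For the entropy part, I would use a determinant inequality of the form $-\log\det(I-A) \le \tr(A) + c(\alpha)\|A\|_{\HS}^2$ valid whenever $\|A\|_{\op}\le 1-\tfrac1\alpha$, obtained from $-\log\det(I-A)-\tr(A) = \sum_{k\ge2}\tr(A^k)/k$ and $|\tr(A^k)| \le \|A\|_{\HS}^2\|A\|_{\op}^{k-2}$; applying it with $A = \gamma Jg(x)$, together with $\tr(Jg(x)) = \diver g(x)$ and $\|Jg(x)\|_{\HS}^2 \le B^2\|g\|_{\cH}^2$, bounds the change of $\int\rho\log\rho$ above by $\gamma\int\diver g\,d\mu_n + c_2\gamma^2\|g\|_{\cH}^2$. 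Adding the two, the first-order terms combine into $-\gamma\big(\ps{\nabla F,g}_{L^2(\mu_n)} - \int\diver g\,d\mu_n\big)$, which equals $-\gamma\ps{h_{\mu_n},g}_{\cH}$ directly from the definition $h_{\mu_n} = \int\Phi(y)\nabla F(y)-\nabla\Phi(y)\,d\mu_n(y)$ and the reproducing property ($\ps{\Phi(y)\nabla F(y),g}_{\cH} = \ps{\nabla F(y),g(y)}$ and $\ps{\nabla\Phi(y),g}_{\cH} = \diver g(y)$), while the second-order terms collect into $\tfrac{\gamma^2 K}{2}\|g\|_{\cH}^2$ with $K = (\alpha^2+L)B$, the $\alpha^2$ coming from $c(\alpha)$ in the determinant inequality and the $L$ from the descent lemma.

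The descent-lemma and Taylor steps are routine. The parts I expect to need care are: (i) the determinant inequality with an explicit $\alpha$-dependent constant, which is the source of the $\alpha^2$ in $K$ --- since $Jg(x)$ is not symmetric and cannot be diagonalized, one must argue through the logarithm series and Schur/spectral estimates rather than by functional calculus; and (ii) the justification of the change-of-variables/entropy identity, i.e.\ that $T$ is a diffeomorphism onto an open set, that $\mu_{n+1}$ has a density, and that $F$, $\diver g$ and $\rho\log\rho$ are integrable against the relevant measures --- here one uses $\mu_n \in \cP_2(\X)$ (inherited from a Gaussian initialization through the bounded perturbations $I - \gamma g$ of the identity) and the quadratic growth of $F$.
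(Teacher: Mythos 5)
Your proof is correct, but it takes a genuinely different route from the paper's. The paper interpolates continuously: it sets $\rho_t=(I-tg)\#\mu_n$, shows $\phi_t=I-tg$ is a diffeomorphism with $\|(J\phi_t)^{-1}\|_{\op}\le\alpha$ via a Neumann series, and performs a second-order Taylor expansion of $\varphi(t)=\cF(\rho_t)$; there $\varphi'(0)=-\ps{h_{\mu_n},g}_{\cH}$ and $\varphi''(t)$ splits into a Hessian term bounded by $LB^2\|g\|_{\cH}^2$ and a term $\E_{x\sim\rho_t}\|Jw_t(x)\|_{\HS}^2$ bounded by $\alpha^2B^2\|g\|_{\cH}^2$ through $-Jw_t\circ\phi_t=Jg\,(J\phi_t)^{-1}$ — that is where the paper's $\alpha^2$ comes from. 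You instead stay in discrete time, split $\KL$ into energy plus entropy, treat the energy part with the ordinary smoothness descent lemma and the entropy part with the change-of-variables identity and the log-determinant expansion $-\log\det(I-A)\le\tr A+c(\alpha)\|A\|_{\HS}^2$ for $\|A\|_{\op}\le\tfrac{\alpha-1}{\alpha}$; both routes rely on the same pointwise bounds $\|g(x)\|\le B\|g\|_{\cH}$, $\|Jg(x)\|_{\HS}\le B\|g\|_{\cH}$, the injectivity/diffeomorphism property of $I-\gamma g$, and the reproducing-property identity $\ps{h_{\mu_n},g}_{\cH}=\int\big(\ps{\nabla F,g}-\diver g\big)d\mu_n$. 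What your route buys is elementarity (no velocity fields or second-derivative formulas along Wasserstein curves) and, in fact, a slightly sharper constant: your series estimate gives $c(\alpha)=\tfrac{\alpha}{2}$, since $\sum_{k\ge2}t^{k-2}/k\le\tfrac12(1-t)^{-1}\le\tfrac{\alpha}{2}$ for $t=\|A\|_{\op}\le\tfrac{\alpha-1}{\alpha}$, so your remainder is $\tfrac{\gamma^2}{2}(L+\alpha)B^2\|g\|_{\cH}^2$ and the stated bound with $\alpha^2$ holds a fortiori (your attribution of the $\alpha^2$ to $c(\alpha)$ is unnecessary). Note that both your argument and the paper's own proof produce $B^2\|g\|_{\cH}^2$ in the second-order term while the statement writes $K=(\alpha^2+L)B$; this mismatch is already present in the paper and is not something your proof introduces. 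Your observation that $\mu_{n+1}$ inherits a density plays the role of the paper's propagation of $\mu_n\in C_0(\X)$, and your integrability caveats (quadratic growth of $F$, $\mu_n\in\cP_2(\X)$) are exactly the technical points the paper's sketch also leaves implicit.
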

Inequality~\eqref{eq:TL} is a property of \textit{the functional $\cF$}, and not a property of the SVGD algorithm. Inequality~\eqref{eq:TL} plays the role of a \textit{Taylor inequality} for the functional $\cF$, where $h_{\mu_n}$ is the Wasserstein gradient of $\cF$ at $\mu_n$ under the metric induced by $\cH$. Proposition~\ref{prop:TL} is a slight generalization of \citet[Proposition~5]{korba2020non}, and is not our main contribution, therefore we only sketch its proof in the Appendix. 


\subsection{Main result}
Applying recursively the Taylor inequality Proposition~\ref{prop:TL} with $g = h_{\mu_n}$, we obtain the following descent property for SVGD, which is our main theoretical result. The proof of this result can be found in the Appendix.

\begin{theorem}[Descent lemma]
\label{th:svgd}
  Let Assumptions \ref{ass:V_Lipschitz}, \ref{ass:T1} and \ref{ass:k_bounded}  hold true. Let $\alpha > 1$.   If 
  \begin{align}
  \label{eq:condition-step}
      &\gamma \leq (\alpha-1)\times \\
      &\left(\alpha B^2 \left(1 + \|\nabla F(0)\| + L \int \|x\|d\pi(x) + L \sqrt{\frac{2\cF(\mu_0)}{\lambda}} \right)\right)^{-1}\nonumber
  \end{align}
or 
  \begin{align}
  \label{eq:condition-step-2}
      &\gamma \leq (\alpha-1) \times\\
      &\left(\alpha B^2 \left(1 + 2L\sqrt{\frac{2\cF(\mu_0)}{\lambda}} + L \int \|x-x_\star\|d\mu_0(x))\right)\right)^{-1} \nonumber
  \end{align}
  then
  \begin{equation}
    \label{eq:TL-svgd-cst}
        \cF(\mu_{n+1}) \leq \cF(\mu_{n}) - \gamma\left(1 - \frac{\gamma B (\alpha^2 + L)}{2}\right)\KSD^2(\mu_n|\pi).
    \end{equation}
\end{theorem}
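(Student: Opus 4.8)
The plan is to obtain \eqref{eq:TL-svgd-cst} by applying Proposition~\ref{prop:TL} with the particular choice $g=h_{\mu_n}$ at each iteration. With this choice $\ps{h_{\mu_n},g}_{\cH}=\|h_{\mu_n}\|_{\cH}^2$, so \eqref{eq:TL} becomes $\cF(\mu_{n+1})\le\cF(\mu_n)-\gamma\big(1-\tfrac{\gamma K}{2}\big)\|h_{\mu_n}\|_{\cH}^2$ with $K=(\alpha^2+L)B$, which is precisely \eqref{eq:TL-svgd-cst} once we recall $\KSD^2(\mu_n|\pi)=\|h_{\mu_n}\|_{\cH}^2$. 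Thus everything reduces to verifying the hypothesis of Proposition~\ref{prop:TL} at every step, namely $\gamma\|h_{\mu_n}\|_{\cH}\le\frac{\alpha-1}{\alpha B}$; additionally one wants $\gamma K\le 2$ so that \eqref{eq:TL-svgd-cst} genuinely forces $\cF(\mu_{n+1})\le\cF(\mu_n)$, and I would check that this is guaranteed by (or can be appended to) the step-size requirement. I would then prove $\gamma\|h_{\mu_n}\|_{\cH}\le\frac{\alpha-1}{\alpha B}$ for all $n$ by induction on $n$, carrying as hypothesis that $\mu_n$ has finite first moment and $\cF(\mu_n)\le\cF(\mu_0)$.

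The crux is an a priori bound on $\|h_\mu\|_{\cH}$. Since $h_\mu=\int\Phi(x)\nabla F(x)-\nabla\Phi(x)\,d\mu(x)$, the triangle inequality in $\cH$ and Assumption~\ref{ass:k_bounded} ($\|\Phi(x)\|_{\cH_0}\le B$, $\|\nabla\Phi(x)\|_{\cH}\le B$) give $\|h_\mu\|_{\cH}\le B\big(1+\int\|\nabla F(x)\|d\mu(x)\big)$. Assumption~\ref{ass:V_Lipschitz} yields the linear growth $\|\nabla F(x)\|\le\|\nabla F(0)\|+L\|x\|$, and the bound $\int\|x\|d\mu(x)\le\int\|x\|d\pi(x)+W_1(\mu,\pi)$ follows by integrating $\|x\|\le\|y\|+\|x-y\|$ against an optimal coupling, so
\[
\|h_\mu\|_{\cH}\le B\Big(1+\|\nabla F(0)\|+L\int\|x\|d\pi(x)+L\,W_1(\mu,\pi)\Big).
\]

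For the induction step, assume the hypothesis at $\mu_n$. By Assumption~\ref{ass:T1}, $W_1(\mu_n,\pi)\le\sqrt{2\cF(\mu_n)/\lambda}\le\sqrt{2\cF(\mu_0)/\lambda}$, so the displayed bound and the step-size condition \eqref{eq:condition-step} give $\gamma\|h_{\mu_n}\|_{\cH}\le\frac{\alpha-1}{\alpha B}$. Proposition~\ref{prop:TL} then applies with $g=h_{\mu_n}$, yielding \eqref{eq:TL-svgd-cst} at step $n$, and with $\gamma K\le 2$ this gives $\cF(\mu_{n+1})\le\cF(\mu_n)\le\cF(\mu_0)$. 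Finiteness of the first moment of $\mu_{n+1}$ is inherited because $h_{\mu_n}$ is a bounded map: by Cauchy--Schwarz and the reproducing property $\|h_{\mu_n}(x)\|\le B\|h_{\mu_n}\|_{\cH}$ for all $x$, so $\int\|y\|d\mu_{n+1}(y)\le\int\|x\|d\mu_n(x)+\gamma B\|h_{\mu_n}\|_{\cH}<\infty$. This closes the induction under \eqref{eq:condition-step}. Under \eqref{eq:condition-step-2} the argument is the same, except that $\int\|\nabla F(x)\|d\mu_n(x)$ is bounded via $\nabla F(x_\star)=0$ (Proposition~\ref{prop:stationary}) by $L\int\|x-x_\star\|d\mu_n(x)\le L\big(\int\|x-x_\star\|d\mu_0(x)+W_1(\mu_0,\pi)+W_1(\mu_n,\pi)\big)\le L\big(\int\|x-x_\star\|d\mu_0(x)+2\sqrt{2\cF(\mu_0)/\lambda}\big)$, using T1 twice.

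The $\cH$- and Wasserstein triangle inequalities and the linear-growth estimate are routine; the real point — and the reason the two explicit step-size bounds take exactly the stated shape — is that the a priori control of $\|h_{\mu_n}\|_{\cH}$ has to be closed \emph{before} $\mu_n$ is known, which is why the induction propagates $\cF(\mu_n)\le\cF(\mu_0)$ and passes it through T1. The only other delicate bookkeeping is ensuring $1-\gamma K/2\ge 0$, so that $\cF$ is nonincreasing along the iterates and the induction actually goes through.
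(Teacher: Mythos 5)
Your proposal is correct and follows essentially the same route as the paper's own proof: an a priori bound $\|h_\mu\|_{\cH}\le B\bigl(1+\E_{x\sim\mu}\|\nabla F(x)\|\bigr)$ combined with the $W_1$ triangle inequality and T1 (the paper's Lemma~\ref{lem:T1}), then Proposition~\ref{prop:TL} with $g=h_{\mu_n}$ (Lemma~\ref{lem:svgd}), closed by induction propagating $\cF(\mu_n)\le\cF(\mu_0)$. Your remark that one must ensure $1-\gamma K/2\ge 0$ for the induction hypothesis $\cF(\mu_n)\le\cF(\mu_0)$ to propagate is a fair point that the paper's induction passes over silently (it is only enforced via the extra condition $\gamma<\tfrac{2}{B(\alpha^2+L)}$ in the corollaries), so appending it, as you suggest, is the right call.
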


If $\cF(\mu_0) < \infty$, then, using Theorem~\ref{th:svgd}, $(\cF(\mu_n))_n$ is nonincreasing and $\mu_n$ has a density w.r.t. Lebesgue measure for every $n$ (since $\cF(\mu_n) < \infty$).

In the language of the gradient descent algorithms, Theorem~\ref{th:svgd} is called a descent property. It can be seen as a discrete time analogue of dissipation properties obtained for the PDE modeling SVGD in continuous time in the population limit~\citep{duncan2019geometry,korba2020non}.

Unlike~\citet[Proposition 5]{korba2020non} and~\citet[Theorem 3.3]{liu2017stein}, we do not assume that $\sup_n \KSD(\mu_n|\pi) < \infty$ or that $\gamma \leq \KSD(\mu_n|\pi)^{-1}$ to obtain our descent property. The step size $\gamma$ is bounded by a constant. Iterating Theorem~\ref{th:svgd}, we obtain convergence results as corollaries in the next section.


\section{Convergence and Complexity}
\label{sec:weak}

\subsection{Convergence}
We now show that Theorem~\ref{th:svgd} implies  weak convergence and convergence in $W_1$. 


\begin{corollary}[Weak convergence]
Let Assumptions \ref{ass:V_Lipschitz}, \ref{ass:T1} and \ref{ass:k_bounded} hold true. Let $\alpha > 1$. 
  If $\gamma < \frac{2}{B (\alpha^2 + L)}$, and $\gamma $ further satisfies either \eqref{eq:condition-step} or \eqref{eq:condition-step-2},  then $\mu_n \rightarrow_{n \to +\infty} \pi$ weakly and $W_1(\mu_n,\pi) \to 0$.
\end{corollary}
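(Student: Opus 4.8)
The plan is to turn the descent lemma (Theorem~\ref{th:svgd}) into summability of the squared KSD, then invoke Talagrand's inequality a second time---in its tail form~\eqref{eq:vil2210}---to control the tails of the iterates uniformly in $n$, and finally run a tightness/subsequence argument in which richness of the RKHS identifies every weak cluster point with $\pi$.

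First I would note that $\gamma<\tfrac{2}{B(\alpha^2+L)}$ makes $c\coloneqq\gamma\bigl(1-\tfrac{\gamma B(\alpha^2+L)}{2}\bigr)>0$, so that (since $\gamma$ also satisfies~\eqref{eq:condition-step} or~\eqref{eq:condition-step-2}) Theorem~\ref{th:svgd} applies at every step: $(\cF(\mu_n))_n$ is nonincreasing, hence convergent (being $\ge 0$), and telescoping gives $c\sum_{n\ge 0}\KSD^2(\mu_n|\pi)\le\cF(\mu_0)<\infty$, so that $\KSD(\mu_n|\pi)=\|h_{\mu_n}\|_{\cH}\to 0$. Crucially, this does \emph{not} yield $\cF(\mu_n)\to 0$, and we will not need it.

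Next, I would combine $\KL(\mu_n|\pi)=\cF(\mu_n)\le\cF(\mu_0)$ for all $n$ with~\eqref{eq:vil2210} (equivalent to \Cref{ass:T1}): picking $a\in\X$, $\beta>0$ with $\int e^{\beta\|x-a\|^2}d\pi<\infty$ and applying the Donsker--Varadhan/entropy inequality $\int\phi\,d\mu_n\le\KL(\mu_n|\pi)+\log\int e^{\phi}\,d\pi$ to $\phi=\beta\|\cdot-a\|^2$ yields $\sup_n\int\|x\|^2\,d\mu_n<\infty$. Hence $(\mu_n)$ is tight (Markov) with uniformly integrable first moments. Writing $h_\mu=\int G\,d\mu$ with $G(x)\coloneqq\Phi(x)\nabla F(x)-\nabla\Phi(x)\in\cH$, Assumptions~\ref{ass:k_bounded} and~\ref{ass:V_Lipschitz} together with $\nabla F(x_\star)=0$ (Proposition~\ref{prop:stationary}) give that $G:\X\to\cH$ is continuous with at most linear growth $\|G(x)\|_{\cH}\le BL\|x-x_\star\|+B$; in particular $h_\nu$ is well defined whenever $\nu$ has a finite first moment, and the uniform second-moment bound makes $\{G\}$ uniformly integrable along every subsequence of $(\mu_n)$.

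Then I would take an arbitrary weakly convergent subsequence $\mu_{n_k}\rightharpoonup\nu$ (one exists by tightness and Prokhorov); $\nu$ inherits a finite second moment. Uniform integrability permits passing to the limit in the Bochner integral---e.g.\ via a Skorokhod coupling and Vitali's theorem---so $h_{\mu_{n_k}}\to h_\nu$ in $\cH$, and since $\|h_{\mu_{n_k}}\|_{\cH}\to 0$ we get $h_\nu=0$, i.e.\ $\KSD(\nu|\pi)=0$, whence $\nu=\pi$ by richness of the RKHS. As every subsequence of $(\mu_n)$ thus has a further subsequence converging weakly to $\pi$, the whole sequence converges weakly to $\pi$; combined with the uniform integrability of first moments (which forces $\int\|x\|\,d\mu_n\to\int\|x\|\,d\pi$ by Vitali), this upgrades to $W_1(\mu_n,\pi)\to 0$. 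The one genuinely delicate point is the limit $h_{\mu_{n_k}}\to h_\nu$: since $\nabla F$, hence $G$, is only of linear growth, bounded-continuous test functions do not suffice, and it is exactly the uniform second-moment bound extracted from T1 (via~\eqref{eq:vil2210} and the entropy inequality, using that $\cF(\mu_n)$ stays bounded) that resolves it---and that simultaneously powers the weak-to-$W_1$ upgrade.
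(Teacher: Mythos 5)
Your proof is correct and follows essentially the same route as the paper's: telescope the descent lemma to get $\sum_n \KSD^2(\mu_n|\pi)<\infty$, use the exponential-moment form of T1 together with the uniform bound on $\cF(\mu_n)$ to obtain tightness and uniform integrability, identify every weak cluster point via the continuity and linear growth of $x\mapsto \nabla F(x)\Phi(x)-\nabla\Phi(x)$ and the implication $\KSD(\cdot|\pi)=0\Rightarrow{}=\pi$, and finally upgrade weak convergence to $W_1$. The only differences are in which standard results are invoked: you derive a uniform second-moment bound by hand via the Donsker--Varadhan inequality and pass to the limit in $h_\mu$ via Skorokhod representation plus Vitali, where the paper instead cites Dupuis--Ellis (Lemma 1.4.3) for tightness/uniform integrability and Villani's Theorem 7.12 for the limit of $h_\mu$ and the $W_1$ upgrade.
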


\begin{proof}
    Using Theorem~\ref{th:svgd} and iterating, 
    \begin{equation*}
        \cF(\mu_{n}) \leq \cF(\mu_{0}) - \gamma\left(1 - \frac{\gamma B (\alpha^2 + L)}{2}\right) \sum_{k=0}^{n-1} \KSD^2(\mu_k|\pi).
    \end{equation*}
    Therefore, $\cF(\mu_n)$ is uniformly bounded. 
    For every $n \geq 1$,
    \begin{equation*}
    \gamma\left(1 - \frac{\gamma B (\alpha^2 + L)}{2}\right) \sum_{k=0}^{n-1} \KSD^2(\mu_k|\pi) \leq \cF(\mu_0).
    \end{equation*}
    Consequently, $\sum_{n=0}^{+\infty} \KSD^2(\mu_n|\pi) < \infty$. Therefore $\KSD(\mu_n|\pi) \rightarrow_{n \to +\infty} 0$.

    Moreover, using~\Cref{ass:T1} and~\eqref{eq:vil2210}, for every $a \in \X$, $\int \exp(\ps{a,x})d\pi(x) < \infty$. Therefore, using~\citet[Lemma 1.4.3]{dupuis2011weak}, $(\mu_n)$ is both tight and uniformly integrable. Consider a subsequence of $(\mu_{\phi(n)})$ converging weakly to some $\mu_\star$. We shall prove that $\mu_\star = \pi$.
    
    First, using~\Cref{ass:V_Lipschitz} and~\Cref{ass:k_bounded}, $x \mapsto \nabla F(x)\Phi(x) - \nabla \Phi(x) \in \cH$ is continuous and \begin{align}
    \label{eq:sublin}
        &\left\| \nabla F(x)\Phi(x) - \nabla \Phi(x) \right\|_{\cH} \nonumber\\
        &\left\| \nabla F(x)\Phi(x)\right\|_{\cH} + \left\|\nabla \Phi(x) \right\|_{\cH} \nonumber\\
        =& \left\|\nabla F(x)\right\|\left\|\Phi(x)\right\|_{\cH_0} + \left\|\nabla \Phi(x) \right\|_{\cH} \nonumber\\
        \leq& B \left( \left\|\nabla F(x)\right\| + 1 \right) \nonumber\\
        \leq& B \left( \left\|\nabla F(0)\right\| + L \|x\|+ 1 \right).\nonumber
    \end{align}
    Moreover, as a subsequence, $(\mu_{\phi(n)})$ is also uniformly integrable and also converges weakly to $\mu_\star$. Therefore, using~\citet[Theorem 7.12]{villani2003topics} with $p = 1$,
        $\E_{x \sim \mu_{\phi(n)}} \left(\nabla F(x)\Phi(x) - \nabla \Phi(x)\right)$ converges to $\E_{x \sim \mu_\star} \left(\nabla F(x)\Phi(x) - \nabla \Phi(x)\right)$
    in $\cH$. In other words, $h_{\mu_{\phi(n)}}$ converges to $h_{\mu_\star}$ in $\cH$. Taking the norm, $\KSD(\mu_{\phi(n)}|\pi) \rightarrow \KSD(\mu_\star|\pi)$ along the subsequence. Recalling that $\KSD(\mu_n | \pi) \rightarrow 0$ we obtain $\KSD(\mu_\star|\pi) = 0$, which implies $\mu_\star = \pi$. 
    
    In conclusion, $\mu_n \rightarrow_{n \to +\infty} \pi$ weakly. Moreover, the convergence also happens in $W_1$ because $(\mu_n)$ is uniformly integrable, see~\citep[Theorem 7.12]{villani2003topics}.
\end{proof}

In summary, under T1 and some smoothness assumptions but \textit{without convexity of the potential}, SVGD in the population limit converges to the target distribution.

One can be surprised to see that SVGD converges without convexity assumption on $F$, but this is actually natural if one thinks about the gradient descent interpretation of SVGD. Indeed, SVGD in the population limit is a gradient descent on the KL divergence, which is 
\begin{itemize}
    \item "smooth" if we restrict the descent directions to a RKHS (\textit{i.e.}, it satisfies a Taylor inequality Proposition~\ref{prop:TL}),
    \item coercive (\textit{i.e.}, sublevel sets are tight)~\citet[Lemma 1.4.3]{dupuis2011weak},
    \item and has a single stationary point which is its global minimizer (the KSD is the norm of the gradient of KL in our interpretation, and the KSD is equal to zero only at the optimum).
\end{itemize}  
One can show that, over $\R^d$, gradient descent applied to a smooth coercive function with a single stationary point converges to the global minimizer. The situation here is similar.



\subsection{Complexity}

Next, we provide a $\cO(1/n)$ convergence rate for the empirical mean of the iterates $\mu_n$ in terms of the squared KSD. This result is obtained from our descent lemma (Theorem~\ref{th:svgd}).

\begin{corollary}[Convergence rate]
\label{cor:conv}
Let Assumptions \ref{ass:V_Lipschitz}, \ref{ass:T1} and  \ref{ass:k_bounded} hold true. Let $\alpha > 1$. 
  If $\gamma < \frac{2}{B (\alpha^2 + L)}$, and $\gamma $ further satisfies either \eqref{eq:condition-step} or \eqref{eq:condition-step-2},   then
  \begin{equation}
     I_{\st}(\bar{\mu}_n|\pi) \leq \frac{2\cF(\mu_0)}{n \gamma},
  \end{equation}
  where $\bar{\mu}_n = \frac{1}{n} \sum_{k=0}^{n-1} \mu_k$.
\end{corollary}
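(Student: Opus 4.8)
The plan is to average the descent lemma (Theorem~\ref{th:svgd}) over the first $n$ iterations and exploit convexity of the Stein Fisher information functional $\mu \mapsto I_{\st}(\mu|\pi) = \|h_\mu\|_\cH^2$ along the map $\mu \mapsto h_\mu$. First I would note that the hypotheses of the corollary guarantee that the step-size factor $1 - \frac{\gamma B(\alpha^2+L)}{2}$ is strictly positive (this is exactly the condition $\gamma < \frac{2}{B(\alpha^2+L)}$), so Theorem~\ref{th:svgd} yields, for every $k \ge 0$,
\begin{equation*}
\gamma\left(1 - \frac{\gamma B(\alpha^2+L)}{2}\right)\KSD^2(\mu_k|\pi) \leq \cF(\mu_k) - \cF(\mu_{k+1}).
\end{equation*}
Summing this telescoping inequality for $k = 0, \dots, n-1$ and using $\cF(\mu_n) \geq 0$ gives
\begin{equation*}
\gamma\left(1 - \frac{\gamma B(\alpha^2+L)}{2}\right)\sum_{k=0}^{n-1}\KSD^2(\mu_k|\pi) \leq \cF(\mu_0).
\end{equation*}
Since $1 - \frac{\gamma B(\alpha^2+L)}{2} \le 1$ is not quite what we want for a clean bound, I would instead bound that prefactor below — actually, to match the stated constant $\frac{2\cF(\mu_0)}{n\gamma}$ exactly, the cleanest route is to observe $1 - \frac{\gamma B(\alpha^2+L)}{2} \ge \frac12$ when $\gamma \le \frac{1}{B(\alpha^2+L)}$, giving $\frac{\gamma}{2}\sum_{k=0}^{n-1}\KSD^2(\mu_k|\pi) \le \cF(\mu_0)$; one should double-check against the precise step-size hypothesis in the corollary, but some such constant comparison reduces the right-hand side to $\frac{2\cF(\mu_0)}{n\gamma}$ after dividing by $n$.

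Next I would pass from the average of $\KSD^2(\mu_k|\pi)$ to $I_{\st}(\bar\mu_n|\pi)$. The key observation is that $h_\mu$ depends \emph{linearly} (affinely, in fact linearly) on $\mu$: $h_\mu = \int \big(\nabla F(x)\Phi(x) - \nabla\Phi(x)\big)\,d\mu(x)$, so $h_{\bar\mu_n} = \frac{1}{n}\sum_{k=0}^{n-1} h_{\mu_k}$. Then, by convexity of the squared norm $\|\cdot\|_\cH^2$ and Jensen's inequality,
\begin{equation*}
I_{\st}(\bar\mu_n|\pi) = \left\|\frac{1}{n}\sum_{k=0}^{n-1} h_{\mu_k}\right\|_\cH^2 \leq \frac{1}{n}\sum_{k=0}^{n-1}\|h_{\mu_k}\|_\cH^2 = \frac{1}{n}\sum_{k=0}^{n-1}\KSD^2(\mu_k|\pi).
\end{equation*}
Combining this with the telescoped descent bound divided by $n$ finishes the proof: $I_{\st}(\bar\mu_n|\pi) \le \frac{2\cF(\mu_0)}{n\gamma}$.

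I do not expect a serious obstacle here; this is a standard averaging argument for convex-type rates. The one point that needs care is the bookkeeping of constants: the descent lemma carries the factor $\gamma(1 - \gamma B(\alpha^2+L)/2)$, and producing \emph{exactly} $\frac{2\cF(\mu_0)}{n\gamma}$ requires using the step-size restriction to lower-bound $1 - \gamma B(\alpha^2+L)/2$ by $\tfrac12$; I would verify that the hypothesis $\gamma < \frac{2}{B(\alpha^2+L)}$ together with whichever of \eqref{eq:condition-step}/\eqref{eq:condition-step-2} is in force actually delivers this (or, if not, state the rate with the sharp prefactor $\gamma(1-\gamma B(\alpha^2+L)/2)$ and note $\tfrac{2\cF(\mu_0)}{n\gamma}$ as the simplified bound under the slightly stronger restriction $\gamma \le \frac{1}{B(\alpha^2+L)}$). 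The other minor point is that $\bar\mu_n \in \cP_2(\X)$ and $h_{\bar\mu_n}$ is well-defined, which follows since each $\mu_k \in \cP_2(\X)$ (as $\cF(\mu_k) < \infty$ and $\pi$ satisfies T1, hence $\pi \in \cP_2(\X)$, so finite KL forces finite second moment via the transport inequality) and $\cP_2(\X)$ is convex, plus the linearity of $\mu \mapsto h_\mu$ already noted.
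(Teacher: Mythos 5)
Your proof is correct and follows essentially the same route as the paper: telescope the descent lemma (Theorem~\ref{th:svgd}), then use the linearity of $\mu \mapsto h_\mu$ and convexity of $\|\cdot\|_{\cH}^2$ (Jensen) to pass from the average of $\KSD^2(\mu_k|\pi)$ to $I_{\st}(\bar{\mu}_n|\pi)$. The prefactor caveat you flag is genuine — the paper's own proof silently replaces $\gamma\left(1-\frac{\gamma B(\alpha^2+L)}{2}\right)$ by $\frac{\gamma}{2}$, which requires $\gamma \leq \frac{1}{B(\alpha^2+L)}$ rather than merely $\gamma < \frac{2}{B(\alpha^2+L)}$ — so your explicit handling of that constant is, if anything, more careful than the paper's.
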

Note that this convergence rate is given in terms of the uniform mixture of $\mu_0, \ldots, \mu_{n-1}$. Similar mixtures appear in the analysis of Langevin algorithm (see \textit{e.g.}~\citet{durmus2018analysis}). Note also that the convergence rate in Corollary~\ref{cor:conv} is similar to the convergence rate of the squared norm of the gradient in the gradient descent algorithm applied to a smooth function~\citep{nesterov2013introductory}.
\begin{proof}
    Using Theorem~\ref{th:svgd}, 
$
        \cF(\mu_{n+1}) \leq \cF(\mu_{n}) - \frac{\gamma}{2} \KSD^2(\mu_n|\pi),
$
    and by iterating, we get
    \begin{equation*}
        0 \leq \cF(\mu_{n}) \leq \cF(\mu_{0}) - \frac{\gamma}{2} \sum_{k=0}^{n-1} \|h_{\mu_k}\|^2.
    \end{equation*}
    Rearranging the terms, and using the convexity of the squared norm,
    \begin{equation*}
        \|h_{\bar{\mu}_n}\|^2 = \left \|\frac{1}{n}\sum_{k=0}^{n-1} h_{\mu_k} \right\|^2 \leq \frac{1}{n}\sum_{k=0}^{n-1} \|h_{\mu_k}\|^2 \leq \frac{2 \cF(\mu_0)}{n\gamma}.
    \end{equation*}
\end{proof}


From the last result, we can characterize the iteration complexity of SVGD.
\begin{corollary}[Complexity]
\label{cor:comp}
Let Assumptions \ref{ass:V_Lipschitz}, \ref{ass:T1} and \ref{ass:k_bounded} hold true. Let $\alpha > 1$. If $\gamma \leq \min(\frac{2}{B (\alpha^2 + L)},\frac{\alpha-1}{\alpha K})$, where
\begin{equation*}
      K \coloneqq B^2 \left(1 + 2L\sqrt{\frac{2}{\lambda}}\sqrt{F(x_\star) + \frac{d}{2}\log \left(\frac{L}{2\pi}\right)} + \sqrt{L d}\right),
  \end{equation*}
  and if $\mu_0 = \cN(x_\star,\frac{1}{L} I)$,
  then $$n = \tilde{\Omega}\left(\frac{L{d}^{3/2}}{\lambda^{1/2}\varepsilon}\right)$$ iterations of SVGD suffice to output $\mu \coloneqq \bar{\mu}_n$ such that $I_{\st}(\mu|\pi) \leq \varepsilon.$
\end{corollary}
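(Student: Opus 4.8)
The plan is to specialize the $O(1/n)$ rate of Corollary~\ref{cor:conv} to the initialization $\mu_0 = \cN(x_\star,\tfrac1L I)$ and then read off the dependence on $d,\lambda,L,\varepsilon$. Corollary~\ref{cor:conv} gives $I_{\st}(\bar\mu_n|\pi)\le 2\cF(\mu_0)/(n\gamma)$ as soon as $\gamma<2/(B(\alpha^2+L))$ and $\gamma$ satisfies \eqref{eq:condition-step} or \eqref{eq:condition-step-2}; hence $n\ge 2\cF(\mu_0)/(\gamma\varepsilon)$ iterations suffice to guarantee $I_{\st}(\bar\mu_n|\pi)\le\varepsilon$. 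So it remains to (i) bound $\cF(\mu_0)$, (ii) check that the prescribed $\gamma$ is admissible, and (iii) take $\gamma$ as large as allowed and track orders.

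For (i) I would compute $\KL(\cN(x_\star,\tfrac1L I)\,\|\,\pi)$ in closed form. The differential entropy of $\cN(x_\star,\tfrac1L I)$ equals $\tfrac d2\log(2\pi e/L)$, and since $-\log\pi = F+\log Z$ with $Z=\int e^{-F}$, one gets $\cF(\mu_0) = -\tfrac d2\log(2\pi e/L) + \E_{\mu_0}[F] + \log Z$. By Assumption~\ref{ass:V_Lipschitz} together with $\nabla F(x_\star)=0$ (Proposition~\ref{prop:stationary}), the descent-lemma bound gives $\E_{\mu_0}[F]\le F(x_\star)+\tfrac L2\E_{\mu_0}\|x-x_\star\|^2 = F(x_\star)+\tfrac d2$, so the two $\tfrac d2$ terms cancel and $\cF(\mu_0)\le F(x_\star)+\tfrac d2\log(L/2\pi)+\log Z$. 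The nuisance $\log Z$ is removed by the standard normalization of replacing $F$ with $F+\log Z$, which leaves $\pi$ unchanged and makes $\int e^{-F}=1$; this yields $\cF(\mu_0)\le F(x_\star)+\tfrac d2\log(L/2\pi)$, which is automatically $\ge 0$ since $\cF(\mu_0)\ge 0$.

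For (ii), Jensen's inequality gives $\int\|x-x_\star\|\,d\mu_0(x)\le\big(\E_{\mu_0}\|x-x_\star\|^2\big)^{1/2}=\sqrt{d/L}$, hence $L\int\|x-x_\star\|\,d\mu_0\le\sqrt{Ld}$. Substituting this bound and the bound from (i) into the right-hand side of \eqref{eq:condition-step-2} shows that $\alpha B^2\big(1+2L\sqrt{2\cF(\mu_0)/\lambda}+L\int\|x-x_\star\|d\mu_0\big)\le\alpha K$ with exactly the $K$ of the statement, so $\gamma\le(\alpha-1)/(\alpha K)$ forces \eqref{eq:condition-step-2}; the remaining constraint $\gamma<2/(B(\alpha^2+L))$ is part of the assumed minimum, so Corollary~\ref{cor:conv} applies. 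For (iii), take $\gamma$ maximal, so $1/\gamma=\max\{B(\alpha^2+L)/2,\ \alpha K/(\alpha-1)\}$ and $n\ge 2\cF(\mu_0)/(\gamma\varepsilon)$ suffices. Treating $B,\alpha,F(x_\star)$ as dimension-free and suppressing logarithmic factors, (i) gives $\cF(\mu_0)=\tilde O(d)$ and
\[ K = B^2\Big(1+2L\sqrt{\tfrac2\lambda}\sqrt{F(x_\star)+\tfrac d2\log\tfrac{L}{2\pi}}+\sqrt{Ld}\Big)=\tilde O\!\big(L\sqrt{d/\lambda}\big), \]
since in the standard regime $\lambda\lesssim L$ the middle term dominates $\sqrt{Ld}$, the constant, and the $L$ of the other branch of the max; hence $1/\gamma=\tilde O(L\sqrt{d/\lambda})$ and
\[ n=\tilde\Omega\!\left(\frac{\cF(\mu_0)}{\gamma\varepsilon}\right)=\tilde\Omega\!\left(\frac{d\cdot L\sqrt{d/\lambda}}{\varepsilon}\right)=\tilde\Omega\!\left(\frac{L d^{3/2}}{\lambda^{1/2}\varepsilon}\right). \]

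The Gaussian entropy and second-moment computations are routine. The one conceptually delicate point is the treatment of the normalizing constant $\log Z$ in step (i), which is handled by the constant shift of $F$; and the one place demanding genuine care is the dominance bookkeeping in step (iii), where one must argue which of the several terms of $K$ (and which branch of the $\min$ defining $\gamma$) is largest — this uses the mild and standard assumption $\lambda\lesssim L$, without which the $\sqrt{Ld}$ term would change the stated exponent.
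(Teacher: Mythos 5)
Your proposal is correct and follows essentially the same route as the paper's proof: invoke Corollary~\ref{cor:conv}, bound $\cF(\mu_0)$ by $F(x_\star)+\tfrac{d}{2}\log\left(\tfrac{L}{2\pi}\right)$, bound $\int\|x-x_\star\|\,d\mu_0\le\sqrt{d/L}$, verify that $\gamma\le\tfrac{\alpha-1}{\alpha K}$ enforces \eqref{eq:condition-step-2}, and then track orders. The only (harmless) differences are that you re-derive the initialization bound from the Gaussian entropy and $L$-smoothness instead of citing \citet[Lemma 1]{vempala2019rapid}, and that you make explicit the normalization of $F$ and the implicit dominance assumption $\lambda\lesssim L$ that the paper uses silently in the $\tilde{\cO}$ bookkeeping.
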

To our knowledge, Corollary~\ref{cor:comp} provides the first \textit{dimension-dependent} complexity result for SVGD. Its proof can be found in the appendix. The dependence of the T1 constant $\lambda$ in the dimension $d$ is subject to active research in optimal transport theory~\citep[Remark 22.11]{villani2008optimal} and is out of the scope of this paper. Yet, using~\citet[Theorem 22.10, Equation 22.16]{villani2008optimal}, $\lambda$ can be taken as
\begin{equation*}
      1/\lambda = \min_{a \in \mathcal X,\beta > 0}\frac{1}{\beta^2}\left(1 + \log \int \exp(\beta \|x-a\|^2)d\pi(x)\right).
\end{equation*}

Note that the output $\mu$ of the algorithm is a mixture of the iterates: $\mu = \bar{\mu}_n$. Besides, optimizing the complexity over $\alpha$ leads to involved calculations that do not change the overall complexity. To see this, note that the larger the step size $\gamma$, the smaller the complexity. But, even if the step size $\gamma = \min(\frac{2}{BL},1/K)$ were allowed, the overall complexity would be the same.
\section{Conclusion}

We proved that under T1 inequality and some smoothness assumptions on the kernel and the potential of the target distribution but without convexity, SVGD in the population limit converges weakly and in 1-Wasserstein distance to the target distribution. Moreover, we showed that SVGD reaches $\varepsilon$ accuracy in terms of the squared Kernelized Stein Discrepancy after $\tilde{\Omega} \left(\frac{{d}^{3/2}}{\varepsilon}\right)$ iterations.

A possible extension of our work is to study SVGD under functional inequalities other than T1, such as \citep[Corollary 2.6 (i)]{bolley2005weighted} (which is weaker than T1) or the Poincaré inequality (in the form of~\citet[Theorem 22.25 (iii)]{villani2008optimal}). We claim that our approach can be extended to study SVGD in these settings. 


Finally, an important and difficult open problem in the analysis of SVGD is to characterize its complexity with a finite number of particles, i.e. with discrete measures. In this regime, we lose the interpretation of SVGD as a gradient descent in the space of probability measures, because the KL divergence w.r.t. the target distribution is infinite. However, we believe that our clean analysis in the population limit makes a first step towards this open problem.   

\newpage
\bibliography{math}

\newcommand{\noop}[1]{} \def\cprime{$'$} \def\cdprime{$''$} \def\cprime{$'$}
\begin{thebibliography}{44}
\providecommand{\natexlab}[1]{#1}
\providecommand{\url}[1]{\texttt{#1}}
\expandafter\ifx\csname urlstyle\endcsname\relax
  \providecommand{\doi}[1]{doi: #1}\else
  \providecommand{\doi}{doi: \begingroup \urlstyle{rm}\Url}\fi

\bibitem[Balasubramanian et~al.(2022)Balasubramanian, Chewi, Erdogdu, Salim,
  and Zhang]{balasubramanian2022towards}
Balasubramanian, K., Chewi, S., Erdogdu, M.~A., Salim, A., and Zhang, M.
\newblock Towards a theory of non-log-concave sampling: first-order
  stationarity guarantees for langevin monte carlo.
\newblock \emph{arXiv preprint arXiv:2202.05214}, 2022.

\bibitem[Bernton(2018)]{bernton2018langevin}
Bernton, E.
\newblock {L}angevin {M}onte {C}arlo and {JKO} splitting.
\newblock In \emph{Conference on Learning Theory (COLT)}, pp.\  1777--1798,
  2018.

\bibitem[Bolley \& Villani(2005)Bolley and Villani]{bolley2005weighted}
Bolley, F. and Villani, C.
\newblock Weighted csisz{\'a}r-kullback-pinsker inequalities and applications
  to transportation inequalities.
\newblock In \emph{Annales de la Facult{\'e} des sciences de Toulouse:
  Math{\'e}matiques}, volume~14, pp.\  331--352, 2005.

\bibitem[Bubeck et~al.(2018)Bubeck, Eldan, and Lehec]{bubeck2018sampling}
Bubeck, S., Eldan, R., and Lehec, J.
\newblock Sampling from a log-concave distribution with projected {L}angevin
  {M}onte {C}arlo.
\newblock \emph{Discrete \& Computational Geometry}, 59\penalty0 (4):\penalty0
  757--783, 2018.

\bibitem[Carmeli et~al.(2010)Carmeli, De~Vito, Toigo, and
  Umanit{\'a}]{carmeli2010vector}
Carmeli, C., De~Vito, E., Toigo, A., and Umanit{\'a}, V.
\newblock Vector valued reproducing kernel {H}ilbert spaces and universality.
\newblock \emph{Analysis and Applications}, 8\penalty0 (01):\penalty0 19--61,
  2010.

\bibitem[Cheng et~al.(2018)Cheng, Chatterji, Bartlett, and
  Jordan]{cheng2017underdamped}
Cheng, X., Chatterji, N.~S., Bartlett, P.~L., and Jordan, M.~I.
\newblock Underdamped {L}angevin {MCMC}: {A} non-asymptotic analysis.
\newblock In \emph{Conference on Learning Theory (COLT)}, pp.\  300--323, 2018.

\bibitem[Chewi et~al.(2020)Chewi, Gouic, Lu, Maunu, and
  Rigollet]{chewi2020svgd}
Chewi, S., Gouic, T.~L., Lu, C., Maunu, T., and Rigollet, P.
\newblock {SVGD} as a kernelized {W}asserstein gradient flow of the chi-squared
  divergence.
\newblock In \emph{Advances in Neural Information Processing Systems
  (NeurIPS)}, pp.\  2098--2109, 2020.

\bibitem[Chwialkowski et~al.(2016)Chwialkowski, Strathmann, and
  Gretton]{chwialkowski2016kernel}
Chwialkowski, K., Strathmann, H., and Gretton, A.
\newblock A kernel test of goodness of fit.
\newblock In \emph{International Conference on Machine Learning (ICML)}, pp.\
  2606--2615, 2016.

\bibitem[Dalalyan(2017)]{dalalyan2017theoretical}
Dalalyan, A.~S.
\newblock Theoretical guarantees for approximate sampling from smooth and
  log-concave densities.
\newblock \emph{Journal of the Royal Statistical Society: Series B (Statistical
  Methodology)}, 79\penalty0 (3):\penalty0 651--676, 2017.

\bibitem[Duncan et~al.(2019)Duncan, N{\"u}sken, and
  Szpruch]{duncan2019geometry}
Duncan, A., N{\"u}sken, N., and Szpruch, L.
\newblock On the geometry of {S}tein variational gradient descent.
\newblock \emph{arXiv preprint arXiv:1912.00894}, 2019.

\bibitem[Dupuis \& Ellis(2011)Dupuis and Ellis]{dupuis2011weak}
Dupuis, P. and Ellis, R.~S.
\newblock \emph{A weak convergence approach to the theory of large deviations},
  volume 902.
\newblock John Wiley \& Sons, 2011.

\bibitem[Durmus \& Moulines(2017)Durmus and Moulines]{durmus2017nonasymptotic}
Durmus, A. and Moulines, E.
\newblock Nonasymptotic convergence analysis for the unadjusted {L}angevin
  algorithm.
\newblock \emph{The Annals of Applied Probability}, 27\penalty0 (3):\penalty0
  1551--1587, 2017.

\bibitem[Durmus et~al.(2018)Durmus, Moulines, and Pereyra]{durmus2018efficient}
Durmus, A., Moulines, E., and Pereyra, M.
\newblock Efficient {B}ayesian computation by proximal {M}arkov {C}hain {M}onte
  {C}arlo: when {L}angevin meets {M}oreau.
\newblock \emph{SIAM Journal on Imaging Sciences}, 11\penalty0 (1):\penalty0
  473--506, 2018.

\bibitem[Durmus et~al.(2019)Durmus, Majewski, and
  Miasojedow]{durmus2018analysis}
Durmus, A., Majewski, S., and Miasojedow, B.
\newblock Analysis of {L}angevin {M}onte {C}arlo via convex optimization.
\newblock \emph{Journal of Machine Learning Research}, 20\penalty0
  (73):\penalty0 1--46, 2019.

\bibitem[Foster et~al.(2021)Foster, Lyons, and Oberhauser]{foster2021shifted}
Foster, J., Lyons, T., and Oberhauser, H.
\newblock The shifted ode method for underdamped langevin mcmc.
\newblock \emph{arXiv preprint arXiv:2101.03446}, 2021.

\bibitem[Gorham \& Mackey(2017)Gorham and Mackey]{gorham2017measuring}
Gorham, J. and Mackey, L.
\newblock Measuring sample quality with kernels.
\newblock In \emph{International Conference on Machine Learning (ICML)}, pp.\
  1292--1301, 2017.

\bibitem[Gorham et~al.(2020)Gorham, Raj, and Mackey]{gorham2020stochastic}
Gorham, J., Raj, A., and Mackey, L.
\newblock Stochastic stein discrepancies.
\newblock \emph{Advances in Neural Information Processing Systems (NeurIPS)},
  33:\penalty0 17931--17942, 2020.

\bibitem[Hsieh et~al.(2018)Hsieh, Kavis, Rolland, and
  Cevher]{hsieh2018mirrored}
Hsieh, Y.-P., Kavis, A., Rolland, P., and Cevher, V.
\newblock Mirrored {L}angevin dynamics.
\newblock In \emph{Advances in Neural Information Processing Systems
  (NeurIPS)}, pp.\  2878--2887, 2018.

\bibitem[Kassab \& Simeone(2020)Kassab and Simeone]{kassab2020federated}
Kassab, R. and Simeone, O.
\newblock Federated generalized bayesian learning via distributed stein
  variational gradient descent.
\newblock \emph{arXiv preprint arXiv:2009.06419}, 2020.

\bibitem[Korba et~al.(2020)Korba, Salim, Arbel, Luise, and
  Gretton]{korba2020non}
Korba, A., Salim, A., Arbel, M., Luise, G., and Gretton, A.
\newblock A non-asymptotic analysis for {S}tein variational gradient descent.
\newblock In \emph{Advances in Neural Information Processing Systems
  (NeurIPS)}, pp.\  4672--4682, 2020.

\bibitem[Li et~al.(2021)Li, Zha, and Tao]{li2021sqrt}
Li, R., Zha, H., and Tao, M.
\newblock Sqrt (d) dimension dependence of langevin monte carlo.
\newblock \emph{arXiv preprint arXiv:2109.03839}, 2021.

\bibitem[Liu(2017)]{liu2017stein}
Liu, Q.
\newblock Stein variational gradient descent as gradient flow.
\newblock In \emph{Advances in Neural Information Processing Systems (NIPS)},
  pp.\  3115--3123, 2017.

\bibitem[Liu \& Wang(2016)Liu and Wang]{liu2016stein}
Liu, Q. and Wang, D.
\newblock Stein variational gradient descent: A general purpose {B}ayesian
  inference algorithm.
\newblock In \emph{Advances in Neural Information Processing Systems (NIPS)},
  pp.\  2378--2386, 2016.

\bibitem[Liu et~al.(2016)Liu, Lee, and Jordan]{liu2016kernelized}
Liu, Q., Lee, J., and Jordan, M.
\newblock A kernelized {S}tein discrepancy for goodness-of-fit tests.
\newblock In \emph{International Conference on Machine Learning (ICML)}, pp.\
  276--284, 2016.

\bibitem[Liu et~al.(2017)Liu, Ramachandran, Liu, and Peng]{liu2017policy}
Liu, Y., Ramachandran, P., Liu, Q., and Peng, J.
\newblock Stein variational policy gradient.
\newblock \emph{arXiv preprint arXiv:1704.02399}, 2017.

\bibitem[Lu et~al.(2019)Lu, Lu, and Nolen]{lu2019scaling}
Lu, J., Lu, Y., and Nolen, J.
\newblock Scaling limit of the {S}tein variational gradient descent: The mean
  field regime.
\newblock \emph{SIAM Journal on Mathematical Analysis}, 51\penalty0
  (2):\penalty0 648--671, 2019.

\bibitem[Ma et~al.(2019)Ma, Chatterji, Cheng, Flammarion, Bartlett, and
  Jordan]{ma2019there}
Ma, Y.-A., Chatterji, N., Cheng, X., Flammarion, N., Bartlett, P.~L., and
  Jordan, M.~I.
\newblock Is there an analog of {N}esterov acceleration for {MCMC}?
\newblock \emph{arXiv preprint arXiv:1902.00996}, 2019.

\bibitem[Nesterov(2013)]{nesterov2013introductory}
Nesterov, Y.~E.
\newblock \emph{Introductory lectures on convex optimization: A basic course},
  volume~87.
\newblock Springer Science \& Business Media, 2013.

\bibitem[N{\"u}sken \& Renger(2021)N{\"u}sken and Renger]{nusken2021stein}
N{\"u}sken, N. and Renger, D.
\newblock Stein variational gradient descent: many-particle and long-time
  asymptotics.
\newblock \emph{arXiv preprint arXiv:2102.12956}, 2021.

\bibitem[Oates et~al.(2019)Oates, Cockayne, Briol, and
  Girolami]{oates2019convergence}
Oates, C.~J., Cockayne, J., Briol, F.-X., and Girolami, M.
\newblock Convergence rates for a class of estimators based on stein’s
  method.
\newblock \emph{Bernoulli}, 25\penalty0 (2):\penalty0 1141--1159, 2019.

\bibitem[Pu et~al.(2017)Pu, Gan, Henao, Li, Han, and Carin]{pu2017vae}
Pu, Y., Gan, Z., Henao, R., Li, C., Han, S., and Carin, L.
\newblock {VAE} learning via {S}tein variational gradient descent.
\newblock In \emph{Advances in Neural Information Processing Systems (NIPS)},
  pp.\  4236--4245, 2017.

\bibitem[Rolland et~al.(2020)Rolland, Eftekhari, Kavis, and
  Cevher]{rolland2020double}
Rolland, P., Eftekhari, A., Kavis, A., and Cevher, V.
\newblock Double-loop unadjusted {L}angevin algorithm.
\newblock In \emph{International Conference on Machine Learning (ICML)}, pp.\
  8169--8177, 2020.

\bibitem[Salim \& Richt{\'a}rik(2020)Salim and Richt{\'a}rik]{salim2020primal}
Salim, A. and Richt{\'a}rik, P.
\newblock Primal dual interpretation of the proximal stochastic gradient
  {L}angevin algorithm.
\newblock In \emph{Advances in Neural Information Processing Systems
  (NeurIPS)}, pp.\  3786--3796, 2020.

\bibitem[Shen \& Lee(2019)Shen and Lee]{shen2019randomized}
Shen, R. and Lee, Y.~T.
\newblock The randomized midpoint method for log-concave sampling.
\newblock In \emph{Advances in Neural Information Processing Systems
  (NeurIPS)}, pp.\  2100--2111, 2019.

\bibitem[Shi et~al.(2021)Shi, Liu, and Mackey]{shi2021sampling}
Shi, J., Liu, C., and Mackey, L.
\newblock Sampling with mirrored stein operators.
\newblock \emph{arXiv preprint arXiv:2106.12506}, 2021.

\bibitem[{\c{S}}im{\c{s}}ekli(2017)]{csimvsekli2017fractional}
{\c{S}}im{\c{s}}ekli, U.
\newblock Fractional {L}angevin {M}onte {C}arlo: Exploring {L}{\'e}vy driven
  stochastic differential equations for {M}arkov {C}hain {M}onte {C}arlo.
\newblock In \emph{International Conference on Machine Learning (ICML)}, pp.\
  3200--3209, 2017.

\bibitem[Tao et~al.(2019)Tao, Dai, Chen, Bai, Chen, Liu, Zhang, Bobashev, and
  Carin]{tao2019variational}
Tao, C., Dai, S., Chen, L., Bai, K., Chen, J., Liu, C., Zhang, R., Bobashev,
  G., and Carin, L.
\newblock Variational annealing of {GANs}: A {L}angevin perspective.
\newblock In \emph{International Conference on Machine Learning (ICML)}, pp.\
  6176--6185, 2019.

\bibitem[Vempala \& Wibisono(2019)Vempala and Wibisono]{vempala2019rapid}
Vempala, S. and Wibisono, A.
\newblock Rapid convergence of the unadjusted {L}angevin algorithm:
  Isoperimetry suffices.
\newblock In \emph{Advances in Neural Information Processing Systems
  (NeurIPS)}, pp.\  8092--8104, 2019.

\bibitem[Villani(2003)]{villani2003topics}
Villani, C.
\newblock \emph{Topics in optimal transportation}.
\newblock Number~58 in Graduate Studies in Mathematics. American Mathematical
  Society, 2003.

\bibitem[Villani(2008)]{villani2008optimal}
Villani, C.
\newblock \emph{Optimal transport: old and new}, volume 338.
\newblock Springer Science \& Business Media, 2008.

\bibitem[Wibisono(2018)]{wibisono2018sampling}
Wibisono, A.
\newblock Sampling as optimization in the space of measures: The {L}angevin
  dynamics as a composite optimization problem.
\newblock In \emph{Conference on Learning Theory (COLT)}, pp.\  2093–3027,
  2018.

\bibitem[Zhang et~al.(2018)Zhang, Li, Chen, and Carin]{zhang2018learning}
Zhang, R., Li, C., Chen, C., and Carin, L.
\newblock Learning structural weight uncertainty for sequential
  decision-making.
\newblock In \emph{International Conference on Artificial Intelligence and
  Statistics (AISTATS)}, pp.\  1137--1146, 2018.

\bibitem[Zhang et~al.(2019)Zhang, Wen, Chen, Fang, Yu, and
  Carin]{zhang2019scalable}
Zhang, R., Wen, Z., Chen, C., Fang, C., Yu, T., and Carin, L.
\newblock Scalable {T}hompson sampling via optimal transport.
\newblock In \emph{International Conference on Artificial Intelligence and
  Statistics (AISTATS)}, pp.\  87--96, 2019.

\bibitem[Zou et~al.(2019)Zou, Xu, and Gu]{zou2019sampling}
Zou, D., Xu, P., and Gu, Q.
\newblock Sampling from non-log-concave distributions via variance-reduced
  gradient {L}angevin dynamics.
\newblock In \emph{International Conference on Artificial Intelligence and
  Statistics (AISTATS)}, pp.\  2936--2945, 2019.

\end{thebibliography}
\bibliographystyle{icml2022}

\newpage
\appendix
\onecolumn

\part*{Appendix}

\section{Proof of Proposition~\ref{prop:stationary}}

First, we prove that $F$ is coercive, \textit{i.e.}, for every $C>0$, the set $S = \{x \in \X : F(x) \leq C\}$ is compact. Since $F$ is continuous, $S$ is closed. It remains to prove that $S$ is bounded. Assume, by contradiction, that $S$ is unbounded. Then, there exists a sequence $(x_n)$ of points in $\X$ such that $F(x_n) \leq C$, $\|x_n\| \to +\infty$ and $B(x_n) \cap B(x_m) = \emptyset$ for every $n \neq m$, where $B(x)$ denotes  the unit ball centered at $x$. 

Let $n \geq 0$. Using the smoothness of $F$ (\Cref{ass:V_Lipschitz}), for every $x \in B(x_n)$,
\begin{equation*}
    F(x) \leq F(x_n) + \ps{\nabla F(x_n),x-x_n} + \frac{L}{2}.
\end{equation*}
Denote by $V$ the volume of the unit ball centered at $x$, \textit{i.e.} its Lebesgue measure. The positive number $V$ does not depend on $x$. Then

\begin{eqnarray*}
  \int_{B(x_n)} \exp(-F(x))dx & \geq & \int_{B(x_n)} \exp \left(-F(x_n) -\ps{\nabla F(x_n),x-x_n} - \frac{L}{2}\right)dx \\
   & =& V \exp \left(-F(x_n)-\frac{L}{2}\right) \int_{B(x_n)} \exp\left(\ps{\nabla F(x_n),x_n-x}\right)\frac{dx}{V}\\
    &=& V \exp \left(-F(x_n)-\frac{L}{2}\right) \int_{B(0)} \exp\left(\ps{\nabla F(x_n),u}\right)\frac{du}{V}\\
   & \geq& V \exp \left(-F(x_n)-\frac{L}{2}\right)  \exp\left( \int_{B(0)} \ps{\nabla F(x_n),u} \frac{du}{V}\right)\\
   & =& V \exp \left(-F(x_n)-\frac{L}{2}\right)\\
   & \geq& V \exp \left(-C-\frac{L}{2}\right),
\end{eqnarray*}
where we used Jensen's inequality for the uniform distribution over $B(0)$, thanks to the convexity of $t\mapsto \exp(t)$. Finally, 
\begin{equation*}
\int \exp(-F(x))dx \geq \sum_{n = 0}^{\infty} \int_{B(x_n)} \exp(-F(x))dx \geq \sum_{n = 0}^{\infty} V \exp \left(-C-\frac{L}{2}\right) = +\infty,
\end{equation*}
which means that $\exp(-F)$ is not integrable. This contradicts the definition of $F$ and therefore, $S$ is bounded.

Next, since the set $S$ is compact,  $F$ is coercive, and hence $F$ admits a stationary point. Indeed, $F$ is continuous over the compact set $\{x \in \X : F(x) \leq 1\}$, and therefore, $F$ admits a minimizer, $x_\star$, over this set. Moreover, this point $x_\star$ is a stationary point \textit{i.e.}, $\nabla F(x_\star) = 0$ (note that the point $x_\star$ is actually a global minimizer of $F$).

\section{Proof of Proposition~\ref{prop:TL}}

    Let $\phi_t = I - t g$ for $t \in [0,\gamma]$ and $\rho_t = (\phi_t) \# \mu_n$. Note that $\rho_0 = \mu_n$ and $\rho_{\gamma} = \mu_{n+1}$. 
    First, for every $x \in \X$,
    \begin{equation}
    \label{eq:gbound}
	\|g(x)\|^2=\sum_{i=1}^d \ps{k(x,.),g_i}^2_{\cH_0} \le \|k(x,.)\|_{\cH_0}^2 \|g\|_{\cH}^2\le B^2 \|g\|_{\cH}^2,
	\end{equation}
	and
	\begin{align}
	\label{eq:Jgbound}
	\|Jg(x)\|_{\HS}^2&=\sum_{i,j=1}^d \left|\frac{\partial g_i(x)}{\partial x_j} \right|^2\nonumber\\
	&=\sum_{i,j=1}^d \ps{\partial_{x_j}k(x,.), g_i}_{\cH_0}^2 \nonumber \\
	&\le \sum_{i,j=1}^d \| \partial_{x_j}k(x,.)\|^2_{\cH_0} \|g_i\|_{\cH_0}^2 \nonumber\\
	&=\| \nabla k(x,.)\|^2_{\cH}\|g\|^2_{\cH}\nonumber\\
	&\le B^2 \|g\|^2_{\cH}.
	\end{align}
    Hence, 
    \begin{equation}
    \label{eq:invers-glob}
        \|t Jg(x)\|_{\op} \leq \|t Jg(x)\|_{\HS} \leq \gamma B \|g\|_{\cH} \leq \frac{\alpha-1}{\alpha} < 1,
    \end{equation} 
    using our assumption on the step size $\gamma$. Inequality~\eqref{eq:invers-glob} proves that $\phi_t$ is a diffeomorphism for every $t \in [0,\gamma]$. Moreover, 
    \begin{equation}
    \label{eq:alpha}
		\|(J\phi_t(x))^{-1}\|_{\op} \leq \sum_{k=0}^\infty \|t Jg(x)\|_{\op}^k \leq \sum_{k=0}^\infty \left(\frac{\alpha-1}{\alpha}\right)^k = \alpha.
    \end{equation}
    
    Using the density of the pushforward formula,
        \[\rho_t(x) = \left|\det((J\phi_t)^{-1})\mu_n \right|\circ \phi_t^{-1}.\]
        Moreover, $\det(J\phi_t(x))^{-1} \leq \alpha^d$ for every $x \in \X$ using~\eqref{eq:alpha}. Besides, if $\mu_n \in C_0(\X)$ then $\mu_n \circ \phi_t^{-1} \in C_0(\X)$ using that $\phi_t$ is a diffeomorphism. Therefore, $\rho_t \in C_0(\X)$ as the product of a $C_0(\X)$ function with a bounded function. In particular, $\mu_{n+1} \in C_0(\X)$. By induction, $\mu_{k} \in C_0(\X)$ for every $k$.

    Using~\citet[Theorem 5.34]{villani2003topics}, the velocity field ruling the time evolution of $\rho_t$ is $w_t \in L^2(\rho_t)$ defined by $w_t(x) = -g(\phi_t^{-1}(x))$.   
     Denote $\varphi(t) = \cF(\rho_t)$. Using a Taylor expansion,
        \begin{equation}
        \label{eq:Taylor}
            \varphi(\gamma) = \varphi(0) + \gamma \varphi'(0) + \int_{0}^{\gamma} (\gamma - t)\varphi''(t)dt.
        \end{equation}
        We now identify each term. First, 
$\varphi(0) = \cF(\mu_n)$ and $\varphi(\gamma) = \cF(\mu_{n+1})$.
        Using the reproducing property, one can show that
        \begin{equation}
        \label{eq:chainrule1}
            \varphi'(0) =
            -\ps{h_{\mu_n},g}_{\cH}.
        \end{equation}
        


        Moreover, one can show that $\varphi''(t) = \psi_1(t) + \psi_2(t)$, where 
        \begin{equation}
        \label{eq:chainrule2}
        \psi_1(t) = \E_{x \sim \rho_t} \left[ \ps{w_t(x), H_F(x) w_t(x)}\right] \quad \text{and} \quad \psi_2(t) = \E_{x \sim \rho_t} \left[ \|J w_t(x)\|_{\HS}^2 \right].
        \end{equation}
        Recall that $w_t = -g \circ (\phi_t)^{-1}$.
    The first term $\psi_1(t)$ is bounded using the transfer lemma, \Cref{ass:V_Lipschitz} and Inequality~\eqref{eq:gbound}:
    \begin{equation*}
        \psi_1(t) = \E_{x \sim \mu_n} \left[ \ps{g(x), H_V(\phi_t(x)) g(x)}\right] \leq L \|g\|_{L^2(\mu_n)}^2 \leq L B^2 \|g\|_{\cH}^2.
    \end{equation*} 
    For the second term $\psi_2(t)$, using the chain rule, $-J w_t \circ \phi_t = Jg (J \phi_t)^{-1}$. Therefore, 
    \begin{equation*}
        \|J w_t\circ \phi_t(x)\|_{\HS}^2 \leq \|Jg(x)\|_{\HS}^2 \|(J \phi_t)^{-1}(x)\|_{\op}^2 \leq \alpha^2 B^2 \|g\|_{\cH}^2,
    \end{equation*}
    using \eqref{eq:Jgbound} and \eqref{eq:alpha}.
      Combining each of the quantity in the Taylor expansion~\eqref{eq:Taylor} gives the desired result.
\section{Proof of Theorem~\ref{th:svgd}}

We start with a Lemma.

\begin{lemma}
\label{lem:T1}
     Let Assumptions \ref{ass:V_Lipschitz}, \ref{ass:T1} and \ref{ass:k_bounded} hold true. Then, for every $\mu \in \cP_2(\X)$, we have
\begin{equation*}
    \|h_{\mu}\|_{\cH} \leq B \left(1 + \|\nabla F(0)\| + L \int \|x\|d\pi(x)\right) + BL \sqrt{\frac{2\cF(\mu)}{\lambda}}
\end{equation*}
and
\begin{equation*}
\|h_\mu\|_{\cH} \leq B \left(1 + L \sqrt{\frac{2\cF(\mu_0)}{\lambda}} + L \sqrt{\frac{2\cF(\mu)}{\lambda}} + L \int \|x-x_\star\|d\mu_0(x)\right).
\end{equation*}
\end{lemma}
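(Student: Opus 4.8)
The plan is to bound $\|h_\mu\|_{\cH}$ by a constant plus $B\int\|\nabla F\|\,d\mu$, control $\int\|\nabla F(y)\|\,d\mu(y)$ in two ways (anchored either at $0$ or at a stationary point $x_\star$), transfer a first moment of $\mu$ onto $\pi$ (resp.\ onto $\mu_0$) via a $1$-Wasserstein triangle inequality, and finally replace the resulting $W_1$ distances by $\sqrt{2\cF(\cdot)/\lambda}$ using T1. Both claimed bounds are vacuous when $\cF(\mu)=+\infty$, so I may assume $\cF(\mu)<\infty$; together with $\mu\in\cP_2(\X)$ and $\pi\in\cP_2(\X)$ (the latter from \Cref{ass:T1} via~\eqref{eq:vil2210}), this makes every first moment appearing below finite.

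\textbf{Step 1 (reduce to a moment bound), Step 2 (two bounds on the gradient integral).} Writing $h_\mu=\int\Phi(y)\nabla F(y)-\nabla\Phi(y)\,d\mu(y)$ as a Bochner integral in $\cH=\cH_0^d$ and using $\|\Phi(y)\nabla F(y)\|_{\cH}^2=\|\Phi(y)\|_{\cH_0}^2\|\nabla F(y)\|^2$ together with \Cref{ass:k_bounded}, one gets directly $\|h_\mu\|_{\cH}\le B\bigl(1+\int\|\nabla F(y)\|\,d\mu(y)\bigr)$. By $L$-smoothness (\Cref{ass:V_Lipschitz}) one then bounds $\int\|\nabla F\|\,d\mu$ in two ways: either $\|\nabla F(y)\|\le\|\nabla F(0)\|+L\|y\|$, giving $\int\|\nabla F\|\,d\mu\le\|\nabla F(0)\|+L\int\|y\|\,d\mu(y)$, or, using the stationary point $x_\star$ from Proposition~\ref{prop:stationary}, $\|\nabla F(y)\|=\|\nabla F(y)-\nabla F(x_\star)\|\le L\|y-x_\star\|$, giving $\int\|\nabla F\|\,d\mu\le L\int\|y-x_\star\|\,d\mu(y)$.

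\textbf{Step 3 (transfer the moment and apply T1).} Since $y\mapsto\|y\|$ and $y\mapsto\|y-x_\star\|$ are $1$-Lipschitz, testing them against an arbitrary coupling yields $|\int\psi\,d\nu-\int\psi\,d\nu'|\le W_1(\nu,\nu')$ for these $\psi$ and any $\nu,\nu'$ of finite first moment. For the first inequality I take $(\nu,\nu')=(\mu,\pi)$, so $\int\|y\|\,d\mu\le\int\|y\|\,d\pi+W_1(\mu,\pi)$, then use $W_1(\mu,\pi)\le\sqrt{2\cF(\mu)/\lambda}$ (\Cref{ass:T1}) and combine with Steps 1--2. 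For the second inequality I write $\int\|y-x_\star\|\,d\mu\le\int\|y-x_\star\|\,d\mu_0+W_1(\mu,\mu_0)$, bound $W_1(\mu,\mu_0)\le W_1(\mu,\pi)+W_1(\mu_0,\pi)$ by the $W_1$ triangle inequality, apply \Cref{ass:T1} to both terms ($W_1(\mu,\pi)\le\sqrt{2\cF(\mu)/\lambda}$, $W_1(\mu_0,\pi)\le\sqrt{2\cF(\mu_0)/\lambda}$), and again combine with Steps 1--2.

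Everything here is routine estimation; the only mild subtlety is verifying that the $1$-Lipschitz test functions used in Step 3 are admissible for the $W_1$ comparison, which holds because $\mu$, $\pi$ and $\mu_0$ all have finite first moment under the standing hypotheses, so I do not anticipate a genuine obstacle.
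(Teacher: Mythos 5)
Your proposal is correct and follows essentially the same route as the paper's proof: bound $\|h_\mu\|_{\cH}\le B(1+\int\|\nabla F\|\,d\mu)$, control $\|\nabla F\|$ via smoothness anchored at $0$ (resp.\ at $x_\star$), and transfer the first moment to $\pi$ (resp.\ $\mu_0$) by the $W_1$ triangle inequality before invoking T1. The only cosmetic difference is that you phrase the moment transfer via $1$-Lipschitz test functions, whereas the paper writes the same estimates as triangle inequalities involving $W_1(\mu,\delta_0)$ and $W_1(\mu,\delta_{x_\star})$.
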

\begin{proof}
Using \Cref{ass:k_bounded}
    \begin{align*}
        \|h_{\mu}\|_{\cH} &= \left\|\E_{x \sim \mu} \left(\nabla F(x)\Phi(x) - \nabla \Phi(x)\right) \right\|_{\cH} \\
        &\leq \E_{x \sim \mu} \left\| \nabla F(x)\Phi(x) - \nabla \Phi(x) \right\|_{\cH}\\
        &\leq \E_{x \sim \mu} \left\| \nabla F(x)\Phi(x)\right\|_{\cH} + \E_{x \sim \mu} \left\|\nabla \Phi(x) \right\|_{\cH}\\
        &= \E_{x \sim \mu} \left\|\nabla F(x)\right\|\left\|\Phi(x)\right\|_{\cH} + \E_{x \sim \mu} \left\|\nabla \Phi(x) \right\|_{\cH} \\
        & \leq B \left( \E_{x \sim \mu} \left\|\nabla F(x)\right\| + 1 \right).
    \end{align*}
    
Using \Cref{ass:V_Lipschitz} and Proposition~\ref{prop:stationary}, $\|\nabla F(x)\| \leq \|\nabla F(0)\| + L\|x\|$. Therefore, using the triangle inequality for the metric $W_1$,
    \begin{align*}
        \|h_{\mu}\|_{\cH} &\leq B \left(1 + \|\nabla F(0)\| + L\int \|x\|d\mu(x) \right)\\
        &= B \left(1 + \|\nabla F(0)\| + L W_1(\mu,\delta_0) \right) \\
        & \leq B \left(1 + \|\nabla F(0)\| + L W_1(\pi,\delta_0) \right) + BL W_1(\mu,\pi).
    \end{align*}
We obtain the first inequality using \Cref{ass:T1}: $W_1(\mu,\pi) \leq  \sqrt{\frac{2\cF(\mu)}{\lambda}}$.

To prove the second inequality, recall that
$
\|h_\mu\|_{\cH} \leq B \left( \E_{x \sim \mu} \left\|\nabla F(x)\right\| + 1 \right).
$
Using \Cref{ass:V_Lipschitz} and \Cref{prop:stationary}, $\|\nabla F(x)\| = \|\nabla F(x) - \nabla F(x_\star)\| \leq L\|x- x_\star\|$. Therefore, using the triangle inequality for the metric $W_1$,
\begin{align*}
    \int \|x-x_\star\|d\mu(x) = W_1(\mu,\delta_{x_\star}) & \leq W_1(\mu,\pi) + W_1(\pi,\mu_0) + W_1(\mu_0,\delta_{x_\star})\\
    &\leq \sqrt{\frac{2\cF(\mu_0)}{\lambda}} + \sqrt{\frac{2\cF(\mu)}{\lambda}} + W_1(\mu_0,\delta_{x_\star}).
\end{align*}
Therefore, 
\begin{align}
\label{eq:cond2}
\|h_\mu\|_{\cH} &\leq B \left(1 + L\int \|x-x_\star\|d\mu(x) \right)\nonumber\\
&\leq B \left(1 + L\sqrt{\frac{2\cF(\mu_0)}{\lambda}} + L \sqrt{\frac{2\cF(\mu)}{\lambda}} + L W_1(\mu_0,\delta_{x_\star})\right).
\end{align}
\end{proof}

Besides, Proposition~\ref{prop:TL} can be applied to SVGD by setting $g = h_{\mu_n} \in \cH$. In this case, we obtain the following descent property if the step size is small enough.
\begin{lemma}
\label{lem:svgd}
Let Assumptions~\ref{ass:V_Lipschitz} and~\ref{ass:k_bounded} hold true. Let $\alpha > 1$ and choose $\gamma > 0$ such that $\gamma \|h_{\mu_n}\|_{\cH} \leq \frac{\alpha-1}{\alpha B}$. Then,
    \begin{equation}
    \label{eq:TL-svgd}
        \cF(\mu_{n+1}) \leq \cF(\mu_{n}) - \gamma\left(1 - \frac{\gamma K}{2}\right)\|h_{\mu_n}\|_{\cH}^2,
    \end{equation}
    where $K = (\alpha^2 + L)B$.
\end{lemma}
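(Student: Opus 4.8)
The plan is to obtain Lemma~\ref{lem:svgd} as an immediate specialization of the Taylor-type inequality Proposition~\ref{prop:TL}. That proposition applies to any update of the form $\mu_{n+1} = (I-\gamma g)\#\mu_n$ with $g \in \cH$ satisfying $\gamma\|g\|_{\cH} \le \frac{\alpha-1}{\alpha B}$, and the SVGD update~\eqref{eq:svgdpopulation} is exactly of this form with $g = h_{\mu_n}$. So the first thing I would check is that $h_{\mu_n}$ is a genuine element of $\cH$: using Assumptions~\ref{ass:V_Lipschitz} and~\ref{ass:k_bounded} one has $\|\nabla F(x)\Phi(x) - \nabla\Phi(x)\|_{\cH} \le B(1 + \|\nabla F(0)\| + L\|x\|)$, which is $\mu_n$-integrable as soon as $\mu_n$ has a finite first moment (the case in the applications of this lemma), so $h_{\mu_n} = \int(\nabla F(x)\Phi(x) - \nabla\Phi(x))\,d\mu_n(x)$ converges in norm in $\cH$ and Proposition~\ref{prop:TL} is legitimately applicable.

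With the choice $g = h_{\mu_n}$, the step-size hypothesis $\gamma\|g\|_{\cH} \le \frac{\alpha-1}{\alpha B}$ of Proposition~\ref{prop:TL} becomes precisely the hypothesis $\gamma\|h_{\mu_n}\|_{\cH} \le \frac{\alpha-1}{\alpha B}$ of the lemma. Hence Proposition~\ref{prop:TL} yields
\begin{equation*}
\cF(\mu_{n+1}) \le \cF(\mu_n) - \gamma\ps{h_{\mu_n},h_{\mu_n}}_{\cH} + \frac{\gamma^2 K}{2}\|h_{\mu_n}\|_{\cH}^2, \qquad K = (\alpha^2+L)B.
\end{equation*}
Since $\ps{h_{\mu_n},h_{\mu_n}}_{\cH} = \|h_{\mu_n}\|_{\cH}^2$, the right-hand side equals $\cF(\mu_n) - \gamma\|h_{\mu_n}\|_{\cH}^2 + \frac{\gamma^2 K}{2}\|h_{\mu_n}\|_{\cH}^2$, and factoring $\gamma\|h_{\mu_n}\|_{\cH}^2$ out of the last two terms gives exactly~\eqref{eq:TL-svgd}.

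There is essentially no obstacle here: all the analytic content is carried by Proposition~\ref{prop:TL}, and the present statement is just the substitution $g = h_{\mu_n}$ that turns the linear term $-\gamma\ps{h_{\mu_n},g}_{\cH}$ into $-\gamma\|h_{\mu_n}\|_{\cH}^2$. The only point meriting a line of justification is the well-definedness of $h_{\mu_n} \in \cH$, which is why Assumptions~\ref{ass:V_Lipschitz} and~\ref{ass:k_bounded} are invoked; note that Assumption~\ref{ass:T1} is not needed at this step and correctly does not appear among the hypotheses. In the subsequent proof of Theorem~\ref{th:svgd}, this lemma is then combined with Lemma~\ref{lem:T1} to replace the iterate-dependent constraint $\gamma\|h_{\mu_n}\|_{\cH} \le \frac{\alpha-1}{\alpha B}$ by the trajectory-free conditions~\eqref{eq:condition-step} or~\eqref{eq:condition-step-2}.
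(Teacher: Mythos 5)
Your proposal is correct and matches the paper's own argument: the paper also obtains this lemma by directly setting $g = h_{\mu_n}$ in Proposition~\ref{prop:TL}, so that the linear term becomes $-\gamma\|h_{\mu_n}\|_{\cH}^2$ and factoring yields~\eqref{eq:TL-svgd}. Your extra remark on the well-definedness of $h_{\mu_n}\in\cH$ is a reasonable (if implicit in the paper) sanity check, but the route is the same.
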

Contrary to Inequality~\eqref{eq:TL}, Inequality~\eqref{eq:TL-svgd} is a property of \textit{the SVGD algorithm.} 

Having established Proposition~\ref{prop:TL} and Lemmas~\ref{lem:svgd} and \ref{lem:T1}, we are now ready to formulate and prove our main Theorem~\ref{th:svgd}.

\begin{proof}

    We now prove by induction the first implication of Theorem~\ref{th:svgd}: \eqref{eq:condition-step} $\Rightarrow$ \eqref{eq:TL-svgd-cst}.
    First, if $\gamma > 0$ satisfies \eqref{eq:condition-step}, then, using Lemma~\ref{lem:T1}, $\gamma \|h_{\mu_0}\|_{\cH} \leq \frac{\alpha-1}{\alpha B}$. Therefore, using Lemma~\ref{lem:svgd}, 
    \begin{equation*}
        \cF(\mu_{1}) \leq \cF(\mu_{0}) - \gamma\left(1 - \frac{\gamma K}{2}\right)\|h_{\mu_0}\|_{\cH}^2,
    \end{equation*}
\textit{i.e.}, Inequality~\eqref{eq:TL-svgd-cst} holds with $n = 0$.  Now, assume that the condition~\eqref{eq:condition-step} implies Inequality~\eqref{eq:TL-svgd-cst} for every $n \in \{0,\ldots,N-1\}$ and let us prove it for $n = N$. First, $\cF(\mu_N) \leq \cF(\mu_0)$. Letting $A := B \left(1 + \|\nabla F(0)\| + L \int \|x\|d\pi(x)\right) $, this implies \begin{align*}
    A + BL \sqrt{\frac{2\cF(\mu_N)}{\lambda}} \leq A + BL \sqrt{\frac{2\cF(\mu_0)}{\lambda}}. 
\end{align*}
Therefore, if $\gamma > 0$ satisfies \eqref{eq:condition-step}, then $\gamma \|h_{\mu_N}\|_{\cH} \leq \frac{\alpha-1}{\alpha B}$. To see this, using Lemma~\ref{lem:T1} we obtain
\begin{align*}
\gamma \|h_{\mu_N}\|_{\cH}
&\leq  \gamma \left( A + BL \sqrt{\frac{2\cF(\mu_N)}{\lambda}} \right) \leq \gamma \left( A + BL \sqrt{\frac{2\cF(\mu_0)}{\lambda}} \right) \leq \frac{\alpha-1}{\alpha B}.
  \end{align*}
  Therefore, using Lemma~\ref{lem:svgd}, the condition~\eqref{eq:condition-step} implies Inequality~\eqref{eq:TL-svgd-cst} at step $n=N$:
  \begin{equation*}
        \cF(\mu_{N+1}) \leq \cF(\mu_{N}) - \gamma\left(1 - \frac{\gamma K}{2}\right)\|h_{\mu_N}\|_{\cH}^2.
    \end{equation*}
    Finally, it remains to recall that $\|h_{\mu_N}\|_{\cH}^2 = \KSD^2(\mu_N|\pi)$.
    The proof of the second implication of Theorem~\ref{th:svgd}, \eqref{eq:condition-step-2} $\Rightarrow$ \eqref{eq:TL-svgd-cst}, is similar.
    \end{proof}

    \section{Proof of Corollary~\ref{cor:comp}}
Using Corollary~\ref{cor:conv}, if
  \begin{equation*}
      \gamma \leq \min\left((\alpha-1)\left(\alpha B^2 \left(1 + 2L\sqrt{\frac{2\cF(\mu_0)}{\lambda}} + L \int \|x-x_\star\|d\mu_0(x))\right)\right)^{-1},\frac{2}{B (\alpha^2 + L)}\right),
  \end{equation*}
  then,
  \begin{equation*}
      \KSD^2(\bar{\mu}_n|\pi) \leq \frac{2\cF(\mu_0)}{n \gamma}.
  \end{equation*}

Using~\citet[Lemma 1]{vempala2019rapid}, $\cF(\mu_0) \leq F(x_\star) + \frac{d}{2}\log \left(\frac{L}{2\pi}\right)$. Besides,
\begin{equation*}
    \int \|x-x_\star\|d\mu_0(x) = \E_{X \sim \mu_0} \|X - x_\star\| = \frac{1}{\sqrt{L}} \E_{X \sim \mu_0} \|\sqrt{L}(X - x_\star)\|,
\end{equation*}
and using the transfer lemma and Cauchy-Schwartz inequality,
\begin{equation*}
    \int \|x-x_\star\|d\mu_0(x) = \frac{1}{\sqrt{L}} \E_{Y \sim \cN(0,I)} \|Y\| \leq \frac{1}{\sqrt{L}} \left(\E_{Y \sim \cN(0,I)} \|Y\|^2\right)^{1/2} = \sqrt{\frac{d}{L}}.
\end{equation*}
Therefore,
\begin{align*}
    &(\alpha-1)\left(\alpha B^2 \left(1 + 2L\sqrt{\frac{2\cF(\mu_0)}{\lambda}} + L \int \|x-x_\star\|d\mu_0(x))\right)\right)^{-1} \\
    \geq& (\alpha-1)\left(\alpha B^2 \left(1 + 2L\sqrt{\frac{2}{\lambda}}\sqrt{F(x_\star) + \frac{d}{2}\log \left(\frac{L}{2\pi}\right)} + \sqrt{L d}\right)\right)^{-1} \\
    =& \tilde{\Omega}\left(\frac{1}{\frac{L\sqrt{d}}{\sqrt{\lambda}} + \sqrt{Ld}}\right),
\end{align*}
and 
\begin{equation*}
    \gamma^{-1} = \tilde{\cO}\left(\frac{L\sqrt{d}}{\sqrt{\lambda}} + \sqrt{Ld} + L\right) = \tilde{\cO}\left(\frac{L\sqrt{d}}{\sqrt{\lambda}}\right).
\end{equation*}
Since $\cF(\mu_0) = \tilde{\cO}(d)$,
\begin{equation*}
    \frac{\cF(\mu_0)}{\gamma} = \tilde{\cO}\left(\frac{L{d}^{3/2}}{\sqrt{\lambda}}\right).
\end{equation*}
Let $\varepsilon > 0$. To output the mixture $\bar{\mu}_n$ such that $\KSD^2(\bar{\mu}_n|\pi) < \varepsilon$, it suffices to ensure that $\frac{2\cF(\mu_0)}{n\gamma} < \varepsilon$. Therefore, $n = \frac{2\cF(\mu_0)}{\gamma \varepsilon} = \tilde{\Omega}\left(\frac{L{d}^{3/2}}{\varepsilon\sqrt{\lambda}}\right)$ iterations suffice.

\end{document}